\documentclass{arxiv-double-column}
\usepackage{xcolor}
\usepackage{amsmath, amssymb, mathrsfs,amsthm}
\usepackage{multirow}
\usepackage{bbm}
\usepackage{graphicx}
\usepackage{upgreek}
\usepackage{arydshln}

\usepackage{algorithm}
\usepackage{algorithmic}

\usepackage{microtype}
\usepackage{graphicx}
\usepackage{subfigure}
\usepackage{booktabs} % for professional tables

\usepackage{hyperref}
\usepackage{natbib}

%% Adversarial

%% PAC-Bayes

%% Other

% attack

%version avec Q si besoin :
%\newcommand{\dis}[2]{d_{#1}^{#2}(\Q)} 
%\newcommand{\dis}[2]{r_{#1}^{#2}(\Q)} 

%% Set

%%

%%

\bibliographystyle{plainnat}

\newtheorem{definition}{Definition}
\newtheorem{lemma}{Lemma}
\newtheorem{prop}{Proposition}
\newtheorem{theorem}{Theorem}

\title{Enhancing Certified Robustness via Smoothed Weighted Ensembling}

\author[1]{Chizhou Liu}
\author[2]{Yunzhen Feng}
\author[3]{Ranran Wang}
\author[4,1,5]{Bin Dong}
\affil[1]{Center for Data Science, Peking University}
\affil[2]{School of Mathematical Sciences, Peking University}
\affil[3]{National School of Development, Peking University} 
\affil[4]{Beijing International Center for Mathematical Research}
\affil[5]{Institute for Artificial Inteligence, Peking University} 
\begin{document}
\maketitle

\begin{abstract}
Randomized smoothing has achieved state-of-the-art certified robustness against $l_2$-norm adversarial attacks. However, it is not wholly resolved on how to find the optimal base classifier for randomized smoothing. In this work, we employ a Smoothed WEighted ENsembling (SWEEN) scheme to improve the performance of randomized smoothed classifiers. We show the ensembling generality that SWEEN can help achieve optimal certified robustness. Furthermore, theoretical analysis proves that the optimal SWEEN model can be obtained from training under mild assumptions. We also develop an adaptive prediction algorithm to reduce the prediction and certification cost of SWEEN models. Extensive experiments show that SWEEN models outperform the upper envelope of their corresponding candidate models by a large margin. Moreover, SWEEN models constructed using a few small models can achieve comparable performance to a single large model with a notable reduction in training time.
\end{abstract}

\section{Introduction}
Deep neural networks have achieved great success in image classification tasks. However, they are vulnerable to \emph{adversarial examples}, which are small imperceptible perturbations on the original inputs that can cause misclassification \citep{biggio2013evasion,szegedy2014intriguing}. To tackle this problem, researchers have proposed various defense methods to train classifiers robust to adversarial perturbations. These defenses can be roughly categorized into empirical defenses and certified defenses. One of the most successful empirical defenses is adversarial training \citep{kurakin2017adversarial,madry2018towards}, which optimizes the model by minimizing the loss over adversarial examples generated during training. Empirical defenses produce models robust to certain attacks without a theoretical guarantee.  Most of the empirical defenses are heuristic and subsequently broken by more sophisticated adversaries \citep{carlini2017adversarial,athalye2018obfuscated,uesato2018adversarial, tramer2020on}. Certified defenses, either exact or conservative, are introduced to mitigate such deficiency in empirical defenses. In the context of $l_p$ norm-bounded perturbations, exact methods report whether an adversarial example exists within an $l_p$ ball with radius $r$ centered at a given input $x$. Exact methods are usually based on Satisfiability Modulo Theories \citep{katz2017reluplex,ehlers2017formal} or mixed-integer linear programming \citep{lomuscio2017an,fischetti2017deep}, which are computationally inefficient and not scalable \citep{tjeng2019evaluating}. Conservative methods are more computationally efficient, but might mistakenly flag a safe data point as vulnerable to adversarial examples \citep{raghunathan2018certified,wong2018provable,wong2018scaling,gehr2018ai2,mirman2018differentiable,
weng2018towards,zhang2018efficient,raghunathan2018semidefinite,
dvijotham2018a,singh2018fast,wang2018efficient,salman2019a,croce2019provable,
gowal2018on,dvijotham2018training,wang2018mixtrain}. However, both types of defenses are not scalable to practical networks that perform well on modern machine learning problems (e.g., the ImageNet \citep{deng2009imagenet} classification task). 

Recently, a new certified defense technique called \emph{randomized smoothing} has been proposed \citep{lecuyer2019certified,cohen2019certified}. A (randomized) smoothed classifier is constructed from a base classifier, typically a deep neural network. It outputs the most probable class given by its base classifier under a random noise perturbation of the input. Randomized smoothing is scalable due to its independency over architectures and has achieved state-of-the-art certified $l_2$-robustness. In theory, randomized smoothing can apply to any classifiers. However, naively using randomized smoothing on standard-trained classifiers leads to poor robustness results. It is still not wholly resolved on how a base classifier should be trained so that the corresponding smoothed classifier has good robustness properties. Recently, \citet{salman2019provably} employ adversarial training to train base classifiers and substantially improve the performance of randomized smoothing, which indicates that techniques originally proposed for empirical defenses can be useful in finding good base classifiers for randomized smoothing.

In this paper, we take a step towards finding suitable base models for randomized smoothing by model ensembling. The idea of model ensembling has been used in various empirical defenses against adversarial examples and shows promising results for robustness \citep{liu2018towards, strauss2018ensemble, pang2019improving, wang2019resnets, meng2020ensembles, sen2020empir}. Moreover, an ensemble can combine the strengths of candidate models\footnote{In this paper, "candidate model" and "candidate" refer to an individual model used in an ensemble. The term "base model" refers to a model to which randomized smoothing applies. } to achieve superior clean accuracy \citep{hansen1990neural,krogh1994neural}. Thus, we believe ensembling several smoothed models can help improve both the robustness and accuracy. Specifically for randomized smoothing, the smoothing operator is commutative with the ensembling operator: ensembling several smoothed models is equivalent to smoothing an ensembled base model. This property makes the combination suitable and efficient. Therefore, we directly ensemble a base model by taking some pre-trained models as candidates and optimizing the optimal weights for randomized smoothing. We refer to the final model as a Smoothed WEighted ENsembling (SWEEN) model. Moreover, SWEEN does not limit how individual candidate classifiers are trained, thus is compatible with most previously proposed training algorithms on randomized smoothing.

 Our contributions are summarized as follows:
\begin{enumerate}
\item We propose SWEEN to substantially improve the performance of smoothed models. Theoretical analysis shows the ensembling generality and the optimization guarantee: SWEEN can achieve optimal certified robustness w.r.t. the defined $\gamma$-robustness index, which is an extension of previously proposed criteria of certified robustness  (Lemma~\ref{thm2}), and SWEEN can be easily trained to a near-optimal risk with a surrogate loss (Theorem~\ref{thm3}).
%that ensembling can help the optimization for robustness:
%Through theoretical analysis of the expressive power of the SWEEN function class, we show that SWEEN can approximate any smoothed classifiers. Furthermore, we prove that under certain conditions, a SWEEN model can be trained to achieve near-optimal risk.

\item We develop an adaptive prediction algorithm for the weighted ensembling, which effectively reduces the prediction and certification cost of the smoothed ensemble classifier.

\item We evaluate our proposed method through extensive experiments. {On all tasks, SWEEN models outperform the upper envelopes of their respective candidate models in terms of the approximated certified accuracy by a large margin. To the best of our knowledge, our best results achieve the state-of-the-art certified robustness on CIFAR-10 under $\ell_2$ norm.} In addition, SWEEN models can achieve comparable or superior performance to a large individual model using a few candidates with a notable reduction in total training time.
\end{enumerate}

\section{Related Work}
In the past few years, numerous defenses have been proposed to build classifiers robust to adversarial examples. Our work typically involves randomized smoothing and model ensembling.

\textbf{Randomized smoothing} \quad
Randomized smoothing constructs a smoothed classifier from a base classifier via convolution between the input distribution and certain noise distribution. It is first proposed as a heuristic defense by \citep{liu2018towards,cao2017mitigating}. \citet{lecuyer2019certified} first prove robustness guarantees for randomized smoothing utilizing tools from differential privacy. Subsequently, a stronger robustness guarantee is given by \citet{li2018second}. \citet{cohen2019certified} provide a tight robustness bound for isotropic Gaussian noise in $l_2$ robustness setting. The theoretical properties of randomized smoothing in various norm and noise distribution settings have been further discussed in the literature \citep{blum2020random, kumar2020curse, yang2020randomized, lee2019tight, teng2019ell_1,zhang2020black}. Recently, a series of works \citep{salman2019provably,zhai2020macer} develop practical algorithms to train a base classifier for randomized smoothing. Our work improves the performance of smoothed classifiers via weighted ensembling of pre-trained base classifiers.

\textbf{Model ensembling} \quad
Model ensembling has been widely studied and applied in machine learning as a technique to improve the generalization performance of the model \citep{hansen1990neural,krogh1994neural}. \citet{krogh1994neural} show that ensembles constructed from accurate and diverse networks perform better. Recently, simple averaging of multiple neural networks has been a success in ILSVRC competitions \citep{he2016deep,krizhevsky2017imagenet,simonyan2015very}. Model ensembling has also been used in defenses against adversarial examples \citep{liu2018towards,strauss2018ensemble,pang2019improving,wang2019resnets,meng2020ensembles,sen2020empir}. \citet{wang2019resnets} have shown that a jointly trained ensemble of noise injected ResNets can improve clean and robust accuracies. Recently, \citet{meng2020ensembles} find that ensembling diverse weak models can be quite robust to adversarial attacks. Unlike the above works, which are empirical or heuristic, we employ ensembling in randomized smoothing to provide a theoretical robustness certification.

\section{Preliminaries}
\textbf{Notation} \quad Let $\mathcal{Y}=\{1,2,...,M\}$. We overload notation slightly, letting $k$ refer the $M$-dimensional one-hot vector whose $k$-th entry is $1$ for $k=1,...,M$ as well. The choice should be clear from context. Let $\Delta_k=\{(p_1,p_2,...,p_k)\big| p_i\ge 0, \sum_{i=1}^k p_i =1\}$ be the $k$-dimensional probability simplex for $k\in\mathbb{N}_{+}$, and $\Delta=\Delta_M$. For an $M$-dimensional function $f$, we use $f_i$ to refer to its $i$-th entry, $i=1,2,\cdots,M$. We use $\mathcal{N}(0,\sigma^2I)$ to denote the $d$-dimensional Gaussian distribution with mean 0 and variance $\sigma^2I$. We use $\Phi^{-1}$ to denote the inverse of the standard Gaussian CDF, and use $\Gamma$ to denote the gamma function. We use $\mathbb{R}^*$ to denote the set of non-negative real numbers. For $x,a,b\in\mathbb{R},a\le b$, we define $\mathrm{clip}(x;a,b)=\min\{\max\{x,a\},b\}$. We use $\Omega(\cdot)$ to denote Big-Omega notation that suppresses multiplicative constants.

\textbf{Neural network and classifier} \quad Consider a classification problem from $\mathcal{X}\subseteq\mathbb{R}^d$ to classes $\mathcal{Y}$. Assume the input space $\mathcal{X}$ has finite diameter $D=\sup_{x_1,x_2\in\mathcal{X}}\|x_1-x_2\|_2<\infty$. The training set $\{(x_i,y_i)\}_{i=1}^n$ is \emph{i.i.d.} drawn from the data distribution $\mathcal{D}$. We call $f$ a probability function or a classifier if it is a mapping from $\mathbb{R}^d$ to $\Delta$ or $\mathcal{Y}$, respectively. For a probability function $f$,  its induced classifier $f^*$ is defined such that $f^*(x)=\mathop{\arg\max}_{1\le i\le M} f_i(x)$. For simplicity, we will not distinguish between $f$ and $f^*$ when there is no ambiguity, and hence all definitions and properties for classifiers automatically apply to probability functions as well. $f(\cdot;\theta)$ denotes a neural network parameterized by $\theta\in\Theta$. Here $\Theta$ can include hyper-parameters, thus the architectures of $f(\cdot;\theta)$'s do not have to be identical.

\textbf{Certified robustness} \quad We call $x+\delta$ an adversarial example of a classifier $f$, if $f$ correctly classifies $x$ but $f(x+\delta)\neq f(x)$. Usually $\|\delta\|_2$ is small enough so $x+\delta$ and $x$ appear almost identical for the human eye. The ($l_2$-)robust radius of $f$ is defined as
\begin{equation}
r(x,y;f) = \inf_{F(x+\delta)\neq y} \|\delta\|_2,
\end{equation}
which is the radius of the largest $l_2$ ball centered at $x$ within which $f$ consistently predicts the true label $y$ of $x$. Note that $r(x,y;f)=0$ if $f(x)\neq y$. As mentioned before, we can extend the above definitions to the case when $f$ is a probability function by considering the induced classifier $f^*$. A certified robustness method  tries to find some lower bound $r_c(x,y;f)$ of $r(x,y;f)$, and we call $r_c$ a certified radius of $f$. %In most cases $r(x,y;F)$ is either too computationally expensive to solve exactly, or not tractable at all. A certified robustness method typically tries to find some lower bound $r_c(x,y;F)$ of $r(x,y;F)$, and we call $r_c$ a certified radius of $F$. 

\textbf{Randomized smoothing} \quad
Let $f$ be a probability function or a classifier. The (randomized) smoothed function of $f$ is defined as
\begin{equation}
g(x) =\mathbb{E}_{\delta\sim \mathcal{N}(0,\sigma^2I)} [f(x+\delta)].
\end{equation}
The (randomized) smoothed classifier of $f$ is then defined as $g^*$. \citet{cohen2019certified} first provide a tight robustness guarantee for classifier-based smoothed classifiers, which is summerized in the following theorem:
\begin{theorem}\label{thm1}
(\citet{cohen2019certified}) For any classifier $f$, denote its smoothed function by $g$. Then
\begin{equation}
r(x,y;g) \ge \frac{\sigma}{2}[\Phi^{-1}(g_{y}(x))-\Phi^{-1}(\max_{k\neq y}g_{k}(x))].
\end{equation}
\end{theorem}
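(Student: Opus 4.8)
The plan is to reproduce the Neyman--Pearson argument of \citet{cohen2019certified}. Abbreviate $p_A=g_y(x)$ and $p_B=\max_{k\neq y}g_k(x)$ and set $R=\tfrac{\sigma}{2}\big(\Phi^{-1}(p_A)-\Phi^{-1}(p_B)\big)$. If $p_A\le p_B$ then $R\le 0$ and the claim is vacuous because $r(x,y;g)\ge 0$, so assume $p_A>p_B$; this already forces $g^*(x)=y$. Since $r(x,y;g)$ is the infimum of $\|\delta\|_2$ over perturbations with $g^*(x+\delta)\neq y$, it suffices to prove that every $\delta$ with $\|\delta\|_2<R$ satisfies $g_y(x+\delta)>g_k(x+\delta)$ for all $k\neq y$, since then $y$ is the unique maximizer and $g^*(x+\delta)=y$. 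The case $\delta=0$ is immediate from $p_A>p_B$, so fix $\delta\neq 0$ with $\|\delta\|_2<R$ and a class $k\neq y$.

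Introduce $X\sim\mathcal{N}(x,\sigma^2I)$ and $X'\sim\mathcal{N}(x+\delta,\sigma^2I)$, so that $g_i(x)=\Pr[f(X)=i]$ and $g_i(x+\delta)=\Pr[f(X')=i]$. The engine of the proof is a Gaussian Neyman--Pearson lemma: the likelihood ratio $p_{X'}(z)/p_{X}(z)$ is a strictly increasing function of the linear functional $\langle\delta,z-x\rangle$, so every level set of the ratio is a half-space $\{z:\langle\delta,z-x\rangle\le\beta\}$; hence among all measurable $S$ with $\Pr[X\in S]\ge t$ the quantity $\Pr[X'\in S]$ is minimized by the half-space with $\Pr[X\in S]=t$, and among all $S$ with $\Pr[X\in S]\le t$ it is maximized by the complementary half-space. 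Because $\langle\delta,X-x\rangle\sim\mathcal{N}(0,\sigma^2\|\delta\|_2^2)$ and $\langle\delta,X'-x\rangle\sim\mathcal{N}(\|\delta\|_2^2,\sigma^2\|\delta\|_2^2)$, evaluating the mass of these half-spaces under $X'$ is a one-dimensional computation giving $\Phi\big(\Phi^{-1}(t)-\|\delta\|_2/\sigma\big)$ and $\Phi\big(\Phi^{-1}(t)+\|\delta\|_2/\sigma\big)$ respectively.

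Applying the minimizing half-space to $\{z:f(z)=y\}$, whose $X$-mass is $p_A$, and the maximizing half-space to $\{z:f(z)=k\}$, whose $X$-mass is $g_k(x)\le p_B$, gives
\[ g_y(x+\delta)\ \ge\ \Phi\!\left(\Phi^{-1}(p_A)-\frac{\|\delta\|_2}{\sigma}\right)\qquad\text{and}\qquad g_k(x+\delta)\ \le\ \Phi\!\left(\Phi^{-1}(p_B)+\frac{\|\delta\|_2}{\sigma}\right). \]
The inequality $\|\delta\|_2<R$ is precisely $\Phi^{-1}(p_A)-\|\delta\|_2/\sigma>\Phi^{-1}(p_B)+\|\delta\|_2/\sigma$, so monotonicity of $\Phi$ yields $g_y(x+\delta)>g_k(x+\delta)$. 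As $k\neq y$ was arbitrary, $g^*(x+\delta)=y$, and therefore $r(x,y;g)\ge R$.

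The step I expect to be the main obstacle is the Neyman--Pearson lemma: one must justify by a Lagrangian / likelihood-ratio thresholding argument that half-spaces are the extremizers, treat the degenerate cases $p_A,p_B\in\{0,1\}$ (interpreting $\Phi^{-1}$ in the extended reals, and using that a full-support Gaussian forces the relevant indicator to be a.e.\ constant), and carry out the rotation that turns the Gaussian integral over a half-space into a standard normal tail. Everything else is bookkeeping. An equivalent route repackages the same content as a Lipschitz bound: for any $h:\mathbb{R}^d\to[0,1]$ the map $x\mapsto\Phi^{-1}\big(\E_{\delta}[h(x+\delta)]\big)$ is $1/\sigma$-Lipschitz, and applying this to the indicators of $\{f=y\}$ and $\{f=k\}$ and subtracting produces the two displayed inequalities simultaneously; this is the perspective later stressed by \citet{salman2019provably}, though establishing the Lipschitz bound still rests on the same half-space extremality.
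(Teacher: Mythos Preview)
The paper does not supply its own proof of this theorem: it is quoted as a preliminary result from \citet{cohen2019certified}, and the appendix only proves Lemma~\ref{thm2} and Theorem~\ref{thm3}. Your proposal is a faithful and correct reconstruction of the original Neyman--Pearson argument of \citet{cohen2019certified}, including the half-space extremality for the Gaussian likelihood ratio and the one-dimensional tail computation, so there is nothing to compare against in this paper beyond noting that you have reproduced the cited source's proof.
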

Later on, \citet{salman2019provably,zhai2020macer} extends Theorem~\ref{thm1} for probability functions.

\section{SWEEN: Smoothed weighted ensembling}\label{sec4}
In this section, we describe the SWEEN framework we use. We also present some theoretical results for SWEEN models. The proofs of the results in this section can be found in Appendix A. 

\subsection{SWEEN: Overview}
To be specific, we adopt a data-dependent weighted average of neural networks to serve as the base model for smoothing. Suppose we have some pre-trained neural networks $f(\cdot;\theta_1),...,f(\cdot;\theta_K)$ as ensemble candidates. A weighted ensemble model is then
\begin{equation}
f_{ens}(\cdot;\theta, w) = \sum_{k=1}^K w_kf(\cdot;\theta_k),
\end{equation}
where $\theta=(\theta_1,\cdots,\theta_K)\in\Theta^K$, and $ w\in\Delta_K$ is the ensemble weight. For a specific $f_{ens}$, the corresponding SWEEN model is defined as the smoothed function of $f_{ens}$, denoted by $g_{ens}$. We have
\begin{equation}
\begin{split}
g_{ens}(x;\theta, w) &= \mathbb{E}_{\delta}[\sum_{k=1}^K w_kf(x+\delta;\theta_k)] \\
&= \sum_{k=1}^K w_k\mathbb{E}_{\delta}f(x+\delta;\theta_k)=\sum_{k=1}^K w_kg(x;\theta_k),
\end{split}
\end{equation}
where $g(\cdot;\theta)$ is the smoothed function of $f(\cdot;\theta)$. This result means that $g_{ens}$ is the weighted sum of the smoothed functions of the candidate models under the same weight $w$, or more briefly, randomized smoothing and weighted ensembling are commutative. Thus, ensembling under the randomized smoothing framework can provides benefits in improving the accuracy and robustness. 

To find the optimal SWEEN model, we can minimize a surrogate loss of $g_{ens}$ over the training set to obtain the value of appropriate weights. These data-dependent weights can make the ensemble model robust to the presence of some biased candidate models, as they will be assigned with small weights.

\subsection{Certified robustness of SWEEN models}
For a smoothed function $g$, the certified radius at $(x,y)$ provided by Theorem~\ref{thm1} is $r_c(x,y;g) =\mathrm{clip}( \frac{\sigma}{2}[\Phi^{-1}(g_{y}(x))-\Phi^{-1}(\max_{k\neq y}g_{k}(x))];0,D)$. We now formally define $\gamma$-robustness index as a criterion of certified robustness.

\begin{definition}
\label{def1}
($\gamma$-robustness index). For $\gamma:\mathbb{R}^*\to\mathbb{R}^*$ and a smoothed function $g$, the $\gamma$-robustness index of $g$ is defined as
\begin{equation}
\mathcal{I}_{\gamma}(g) = \mathbb{E}_{(x,y)\sim \mathcal{D}}\gamma(r_c(x,y;g)).
\end{equation}
\end{definition}
It can be easily observed that $\gamma$-robustness index is an extension of many frequently-used criteria of certified robustness of smoothed classifiers. 
\begin{prop}
\label{prop1}
Let $\gamma_1(r)=\mathbbm{1}\{r\ge R\},\gamma_2(r)=r,\gamma_3(r)=\frac{\pi^{\frac{d}{2}}}{\Gamma(\frac{d}{2}+1)}r^d$. Then, $\gamma_1$-robustness index is the certified accuracy at radius $R$ \citep{cohen2019certified}; $\gamma_2$-robustness index is the average certified radius \citep{zhai2020macer}; $\gamma_3$-robustness index is the average volume of the certified region.
\end{prop}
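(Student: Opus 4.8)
The plan is to prove each of the three claims by directly substituting the corresponding $\gamma_i$ into Definition~\ref{def1} and recognizing the resulting expectation as the named criterion; no genuine argument is needed beyond matching definitions and recalling one elementary formula. Recall that by Theorem~\ref{thm1} the certified radius used here is $r_c(x,y;g) = \mathrm{clip}(\frac{\sigma}{2}[\Phi^{-1}(g_y(x)) - \Phi^{-1}(\max_{k\ne y} g_k(x))]; 0, D)$.

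For $\gamma_1(r) = \mathbbm{1}\{r \ge R\}$ with $R > 0$, substitution gives $\mathcal{I}_{\gamma_1}(g) = \E_{(x,y)\sim\mathcal{D}}\,\mathbbm{1}\{r_c(x,y;g) \ge R\} = \prob{(x,y)\sim\mathcal{D}}\big(r_c(x,y;g) \ge R\big)$. The one point to check is that this matches the usual definition of certified accuracy at radius $R$, namely the fraction of points that are both correctly classified by $g^*$ and certifiably robust at radius $R$: when $g^*(x) \ne y$ we have $g_y(x) < \max_{k\ne y} g_k(x)$, so the argument of the clip is negative and $r_c(x,y;g) = 0 < R$; hence the event $\{r_c \ge R\}$ already excludes the misclassified points and therefore coincides with ``correctly classified and certified at radius $R$.'' For $\gamma_2(r) = r$, substitution gives $\mathcal{I}_{\gamma_2}(g) = \E_{(x,y)\sim\mathcal{D}}\, r_c(x,y;g)$, which is exactly the average certified radius of \citet{zhai2020macer} (our radius additionally truncates at the input diameter $D$, which only tightens it).

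For $\gamma_3(r) = \frac{\pi^{d/2}}{\Gamma(d/2+1)} r^d$, we invoke the classical fact that $\frac{\pi^{d/2}}{\Gamma(d/2+1)} r^d$ is the Lebesgue volume of the Euclidean ball of radius $r$ in $\mathbb{R}^d$. Since the certified region of $g$ at $(x,y)$ is precisely the closed $l_2$ ball of radius $r_c(x,y;g)$ around $x$, substitution yields $\mathcal{I}_{\gamma_3}(g) = \E_{(x,y)\sim\mathcal{D}}\,\mathrm{Vol}\big(\{x' : \|x'-x\|_2 \le r_c(x,y;g)\}\big)$, the average volume of the certified region. There is essentially no obstacle here: the statement is a definitional unpacking, and the only care needed is the sign/clipping bookkeeping in the $\gamma_1$ case that ensures misclassified points are not counted.
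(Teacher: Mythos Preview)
Your proposal is correct and matches the paper's treatment: the paper does not give a separate proof of Proposition~\ref{prop1} but treats it as an immediate observation from Definition~\ref{def1}, which is exactly the definitional unpacking you carry out. Your extra bookkeeping in the $\gamma_1$ case (that $r_c=0$ for misclassified points, so they are automatically excluded) is a nice explicit check the paper leaves implicit.
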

We note that criteria considering the volume of the certified region are more comprehensive than those only considering the certified radii in a sense, as they take the input dimension into account.

Now consider $\mathcal{F}=\{f(\cdot;\theta): \mathbb{R}^d\to \Delta\big| \theta\in \Theta\}$, the set of neural networks parametrized over $\Theta$.  The corresponding set of smoothed functions is $\mathcal{G}=\{g(x;\theta)=\mathbb{E}_{\delta\sim\mathcal{N}(0,\sigma^2I)}[f(x+\delta;\theta)]| \theta\in \Theta\}$. Suppose $\theta_1,\cdots,\theta_K$ are drawn \emph{i.i.d.} from a fixed probability distribution $p$ on $\Theta$. The set of SWEEN models is then
\begin{equation}
\hat{\mathscr{F}}_{\theta}=\left\{\phi(x) = \sum_{k=1}^Kw_kg(x;\theta_k)\Big| w_k\ge0, \sum_{k=1}^K w_k=1\right\}.
\end{equation}
Similar to \citet{rahimi2008weighted}, we consider mixtures of the form $\phi(x)=\int_{\Theta}w(\theta)g(x;\theta)d\theta$. For a mixture $\phi$, we define $\|\phi\|_{p}:=\sup_\theta|\frac{w(\theta)}{p(\theta)}|$. Define
\begin{equation}
\begin{split}
\mathscr{F}_{p}&=\left\{\phi(x)=\int_{\Theta} w(\theta) g(x ; \theta) d \theta \Big|\right. \\
&\left.\|\phi\|_{p}<\infty, w(\theta) \geq 0, \int_{\Theta} w(\theta) d \theta=1\right\} .
\end{split}
\end{equation}
Note that for any $\phi\in\mathscr{F}_{p}$, $\phi$ is a smoothed probability function.
%One may wonder whether $\mathscr{F}_{p}$ is big enough to capture some good smoothed functions. 
Intuitively, $\mathscr{F}_{p}$ is quite a rich set.
%when $g(\cdot,\theta)$ is expressive enough.
The following result shows that with high probability, the best $\gamma$-robustness index a SWEEN model can obtain is near the optimal $\gamma$-robustness index in the class $\mathscr{F}_p$. Thus, the ensembling generality also holds for the $\gamma$-robustness index we defined for robustness.
\begin{lemma}\label{thm2}
Suppose $\gamma$ is a Lipschitz function. Given $\eta>0$. For any $\varepsilon>0$, for sufficently large $K$, with probability at least $1-\eta$ over $\theta_1,...,\theta_K$ drawn \emph{i.i.d.} from $p$, there exists $\hat{\phi}\in\hat{\mathscr{F}}_{\theta}$ which satisfies
\begin{equation}
\mathcal{I}_{\gamma}(\hat{\phi}) > \sup_{\phi\in\mathscr{F}_p}\mathcal{I}_{\gamma}(\phi) -\varepsilon.
\end{equation}
Moreover, if there exists $\phi_0\in\mathscr{F}_p$ such that $\mathcal{I}_{\gamma}(\phi_0)=\sup_{\phi\in\mathscr{F}_p}\mathcal{I}_{\gamma}(\phi)$, $K=O(\frac{1}{\varepsilon^4})$.
\end{lemma}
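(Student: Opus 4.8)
The plan is to reduce the statement about the $\gamma$-robustness index to a statement about approximating a target mixture $\phi_0 \in \mathscr{F}_p$ (or a near-optimizer) by a finite ensemble $\hat\phi \in \hat{\mathscr{F}}_\theta$ in a suitable norm, and then transferring that approximation through the Lipschitz map $\gamma \circ r_c$ and the expectation $\mathbb{E}_{(x,y)\sim\mathcal{D}}$. Concretely, fix $\varepsilon > 0$. If the supremum is attained, take $\phi_0$; otherwise take $\phi_0 \in \mathscr{F}_p$ with $\mathcal{I}_\gamma(\phi_0) > \sup_{\phi\in\mathscr{F}_p}\mathcal{I}_\gamma(\phi) - \varepsilon/2$. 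Writing $\phi_0(x) = \int_\Theta w(\theta) g(x;\theta)\,d\theta$ with $\|\phi_0\|_p = \sup_\theta |w(\theta)/p(\theta)| =: C < \infty$, I would view $\phi_0$ as the mean $\mathbb{E}_{\theta\sim p}[ (w(\theta)/p(\theta))\, g(\cdot;\theta)]$ of the i.i.d. random functions $v_k := (w(\theta_k)/p(\theta_k))\, g(\cdot;\theta_k)$, each bounded (coordinatewise in $[0,C]$ since $g(\cdot;\theta)\in\Delta$), and apply a Monte Carlo / empirical-process concentration argument à la Rahimi–Recht (``weighted sums of random kitchen sinks''): the empirical average $\frac1K\sum_{k=1}^K v_k$ concentrates around $\phi_0$. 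This is why the non-attained case only gives an existence-of-$K$ statement while the attained case gives the explicit rate $K = O(1/\varepsilon^4)$ — the $L^2(\mathcal{D})$ error of the Monte Carlo average scales like $C/\sqrt K$, and after propagating through $\gamma \circ r_c$ one needs this error below $\varepsilon$, but the propagation (see below) costs a square root, turning $\sqrt K \gtrsim 1/\varepsilon$ into $\sqrt K \gtrsim 1/\varepsilon^2$, i.e. $K \gtrsim 1/\varepsilon^4$.

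The key technical lemma to establish is that the map $g \mapsto \mathcal{I}_\gamma(g)$ is continuous with respect to, say, the $L^2(\mathcal{D})$ norm on smoothed probability functions, with an explicit modulus. Here I would use: (i) $r_c(x,y;g) = \mathrm{clip}(\tfrac{\sigma}{2}[\Phi^{-1}(g_y(x)) - \Phi^{-1}(\max_{k\neq y} g_k(x))]; 0, D)$ depends on $g$ only through a $1$-Lipschitz (after clipping) combination of coordinates, but $\Phi^{-1}$ is \emph{not} Lipschitz near $0$ and $1$ — so the delicate point is handling inputs $x$ where some $g_k(x)$ is extremely close to $0$ or $1$. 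The standard fix is to note that the clip to $[0,D]$ caps the output, so only a band of $g$-values matters, and on the relevant range one gets a Hölder-type (square-root) modulus of continuity for $\Phi^{-1}$ composed with clipping — this is the origin of the extra square root and hence the $\varepsilon^4$ rather than $\varepsilon^2$. Combined with $\gamma$ being $L_\gamma$-Lipschitz, this yields $|\gamma(r_c(x,y;g)) - \gamma(r_c(x,y;g'))| \le L_\gamma \cdot c\sqrt{\|g(x)-g'(x)\|_\infty}$ uniformly, and then by Jensen/Cauchy–Schwarz $|\mathcal{I}_\gamma(g) - \mathcal{I}_\gamma(g')| \le L_\gamma c \sqrt{\mathbb{E}_{(x,y)}\|g(x)-g'(x)\|_\infty}$, which is controlled by the $L^2(\mathcal{D})$-type quantity from the concentration step.

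Putting the pieces together: with probability at least $1-\eta$ over $\theta_1,\dots,\theta_K$, the Monte Carlo average $\bar v := \frac1K\sum_k v_k$ satisfies $\mathbb{E}_{(x,y)}\|\bar v(x) - \phi_0(x)\|_\infty^2 \le O(C^2/K)$ (a vector-valued concentration / bounded-differences argument, $M$ coordinates each bounded by $C$, gives the in-expectation bound directly and the high-probability bound via McDiarmid). Note $\bar v$ has nonnegative weights $w_k = \frac1K \cdot \frac{w(\theta_k)}{p(\theta_k)} \ge 0$; however $\sum_k w_k$ need not equal $1$, so I would either renormalize (the sum concentrates at $1$ since $\mathbb{E}[w(\theta_k)/p(\theta_k)] = \int w = 1$, incurring only a lower-order error) or, more cleanly, note $\hat{\mathscr{F}}_\theta$ requires weights in $\Delta_K$ and handle the normalization explicitly — this bookkeeping is routine but must be done carefully. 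Then for $K$ large enough, $\mathcal{I}_\gamma(\hat\phi) \ge \mathcal{I}_\gamma(\phi_0) - \varepsilon/2 \ge \sup_{\phi\in\mathscr{F}_p}\mathcal{I}_\gamma(\phi) - \varepsilon$, and tracking constants through the two square roots gives $K = O(1/\varepsilon^4)$ in the attained case. The main obstacle I anticipate is precisely the non-Lipschitz behaviour of $\Phi^{-1}$ at the endpoints: one must argue that after clipping the certified radius to $[0,D]$, perturbations of $g$ only ever need to be resolved on a region where $\Phi^{-1}$ has a controlled (Hölder-$\tfrac12$) modulus, so that the error propagation closes with the claimed polynomial rate rather than blowing up.
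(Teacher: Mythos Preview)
Your overall plan matches the paper's: sample-average concentration of $\frac{w(\theta_k)}{p(\theta_k)}\,g(\cdot;\theta_k)$ around $\phi_0$ in $L^2(\mathcal D)$ (the paper's Lemma~1, \`a la Rahimi--Recht via McDiarmid), renormalize the weights to land in $\hat{\mathscr F}_\theta$ (Lemma~2), then push the $L^2$ error through the map $p\mapsto\gamma(r_c(p,y))$ (Lemma~3). The difference is in where each of you locates the square-root loss that yields $K=O(1/\varepsilon^4)$.

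You dismiss the renormalization as ``routine bookkeeping'' incurring only lower-order error and instead attribute the extra square root to a H\"older-$\tfrac12$ modulus for $\Phi^{-1}$ near the endpoints. The paper does the opposite. Its Lemma~4 (citing Proposition~1 of \citet{zhai2020macer}) shows that $p\mapsto \mathrm{clip}\bigl(\tfrac{\sigma}{2}[\Phi^{-1}(p_y)-\Phi^{-1}(\max_{k\ne y}p_k)];\,0,D\bigr)$ is genuinely \emph{Lipschitz} on the simplex: whenever the lower clip is inactive, $p_y$ is the maximal coordinate, so $p_y\ge 1/M$, while the runner-up is at most $1/2$; combined with the upper clip at $D$ this confines both arguments of $\Phi^{-1}$ to a fixed compact subinterval of $(0,1)$ on which its derivative is bounded. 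So your ``main obstacle'' dissolves, and the propagation through $\gamma\circ r_c$ costs only a constant factor. Conversely, the paper's Lemma~2 is where the rate actually degrades: from the unnormalized bound $\|\tilde\phi-\phi_0\|_\mu\le B(K)=O(K^{-1/2})$ it extracts $|C-1|\le\sqrt M\,B(K)$ and, after dividing by $C$, obtains only $\|\hat\phi-\phi_0\|_\mu=O(\sqrt{B(K)})=O(K^{-1/4})$. Both routes reach the stated $K=O(1/\varepsilon^4)$, but for swapped reasons; if you carry out your version you will find the H\"older workaround unnecessary and the renormalization---in the paper's accounting---the sole source of the $K^{-1/4}$.
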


%\subsection{Risk attainable by minimizing the empirical risk}

In practice, the defined robustness index $\mathcal{I}_{\gamma}(\cdot)$ may be hard to optimized directly, in which case we choose a surrogate loss function $l:\mathbb{R}^M\times\mathcal{Y}\to\mathbb{R}$ to approximate it. Now the optimization for the ensemble weight $w$ of a SWEEN model over a training set $\{(x_i,y_i)\}_{i=1}^n$ can be formulated as 
\begin{equation}\label{eqop}
\min_{w\in\Delta_K}\frac{1}{n}\sum_{i=1}^nl( \sum_{k=1}^Kw_kg(x_i;\theta_k),y_i).
\end{equation}
However, this process typically invovles Monte Carlo simulation since we only have access to $f(\cdot, \theta_k), k=1,\cdots,K$. We define the risk and empirical risk w.r.t. the surrogate loss $l$.

\begin{definition}
\label{def2}
(Risk and empirical risk). For a surrogate loss function $l:\mathbb{R}^M\times\mathcal{Y}\to\mathbb{R}$, the risk of a probability function $\phi$ are defined as 
\begin{equation}
\mathcal{R}[\phi] = \mathbb{E}_{(x,y)\sim \mathcal{D}}l(\phi(x),y).
\end{equation}
If $\phi(x) = \sum_{k=1}^Kw_kg(x;\theta_k)\in\hat{\mathscr{F}}_{\theta}$, for training set $\{(x_i,y_i)\}_{i=1}^n$ and sample size $s$, the empirical risk of $\phi$ is defined as
\begin{equation}
 \mathcal{R}_{emp}[\phi]= \frac{1}{n}\sum_{i=1}^n l(\sum_{k=1}^K w_k[\frac{1}{s}\sum_{j=1}^sf(x_i+\delta_{ijk};\theta_k)],y_i),
\end{equation}
where $\delta_{ijk}\stackrel{\text{i.i.d.}}{\sim}\mathcal{N}(0,\sigma^2I), 1\le i\le n,1\le j\le s,1\le k\le K$.
\end{definition}

Now solving for $w$ is reduced to finding the minimizer of $\mathcal{R}_{emp}$. When the loss function $l$ is convex, this problem is a low-dimensional convex optimization, so we can obtain the global empirical risk minimizer using traditional convex optimization algorithms. Furthermore, we have:
\begin{theorem}\label{thm3}
Suppose for all $y\in \mathcal{Y}$, $l(\cdot, y)$ is a Lipschitz function with constant $L$ and is uniformly bounded. Given $\eta>0$. For any $\varepsilon>0$, for sufficently large $K$, if $n=\Omega(\frac{K^2}{\varepsilon^2}), s=\Omega(\frac{\log Kn}{\varepsilon^2})$, then with probability at least $1-\eta$ over the training dataset $\{(x_i,y_i)\}_{i=1}^n$ drawn \emph{i.i.d.} from $\mathcal{D}$ and the parameters $\theta_1,...,\theta_K$ drawn \emph{i.i.d.} from $p$ and the noise samples drawn \emph{i.i.d.} from $\mathcal{N}(0,\sigma^2I)$, the empirical risk minimizer $\hat{\phi}$ over $\hat{\mathscr{F}}_{\theta}$ satisfies
\begin{equation}
\mathcal{R}[\hat{\phi}]-\inf_{\phi\in\mathscr{F}_p}\mathcal{R}[\phi]<\varepsilon.
\end{equation}
Moreover, if there exists $\phi_0\in\mathscr{F}_p$ such that $\mathcal{R}[\phi_0]=\inf_{\phi\in\mathscr{F}_p}\mathcal{R}[\phi]$, $K=O(\frac{1}{\varepsilon^4})$.
\end{theorem}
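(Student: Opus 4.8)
The plan is to split the excess risk $\mathcal{R}[\hat\phi]-R^\star$, where $R^\star:=\inf_{\phi\in\mathscr{F}_p}\mathcal{R}[\phi]$, into three pieces that are controlled separately and recombined by a union bound: an \emph{approximation} error comparing a finite random ensemble in $\hat{\mathscr{F}}_\theta$ against the continuous mixture class $\mathscr{F}_p$; an \emph{estimation} error between $\mathcal{R}$ and the idealized empirical risk $\overline{\mathcal{R}}_{emp}[\phi]:=\frac1n\sum_{i=1}^n l\big(\sum_{k}w_k g(x_i;\theta_k),y_i\big)$ (the objective of (\ref{eqop}), built from the exact smoothed candidates), uniformly over $\hat{\mathscr{F}}_\theta$; and a \emph{Monte Carlo} error between $\overline{\mathcal{R}}_{emp}$ and the computable $\mathcal{R}_{emp}$, which replaces each $g(x_i;\theta_k)$ by $\hat g(x_i;\theta_k):=\frac1s\sum_{j}f(x_i+\delta_{ijk};\theta_k)$, again uniformly over $\hat{\mathscr{F}}_\theta$. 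I would force the estimation and Monte Carlo errors below $\varepsilon/6$ and the approximation error $\mathcal{R}[\tilde\phi]-R^\star$ below $2\varepsilon/6$, each on an event of probability $\ge1-\eta/3$, so that on the intersection the arithmetic below gives $\mathcal{R}[\hat\phi]\le R^\star+\varepsilon$ with probability $\ge1-\eta$.

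\emph{Approximation.} First I would pick $\phi_0\in\mathscr{F}_p$ with weight function $w_0$, $\|\phi_0\|_p\le C$, and $\mathcal{R}[\phi_0]\le R^\star+\varepsilon/6$; for the ``moreover'' claim $\phi_0$ is the exact minimizer so $C$ is a fixed constant, whereas in general $C$ may grow with $\varepsilon$, which is exactly why only ``sufficiently large $K$'' (with no explicit rate) can be asserted. Following the Rahimi--Recht device underlying Lemma~\ref{thm2}, set $\tilde\phi(x)=\frac1K\sum_{k=1}^K\frac{w_0(\theta_k)}{p(\theta_k)}g(x;\theta_k)$ with $\theta_1,\dots,\theta_K\stackrel{\text{i.i.d.}}{\sim}p$. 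Then $\mathbb{E}_\theta\tilde\phi(x)=\phi_0(x)$ pointwise, and since each $g(\cdot;\theta)$ is $\Delta$-valued with $w_0/p\le C$, summing the variances of the $K$ i.i.d.\ summands gives $\mathbb{E}_\theta\|\tilde\phi(x)-\phi_0(x)\|^2=O(C^2/K)$ uniformly in $x$. As $l(\cdot,y)$ is $L$-Lipschitz, Jensen's inequality gives $\mathbb{E}_\theta\big(\mathcal{R}[\tilde\phi]-\mathcal{R}[\phi_0]\big)\le L\,\mathbb{E}_\theta\mathbb{E}_{(x,y)}\|\tilde\phi(x)-\phi_0(x)\|=O(LC/\sqrt K)$. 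A Markov bound promotes this to a high-probability statement, and the same-order correction for renormalizing $\tilde\phi$ into $\hat{\mathscr{F}}_\theta$ — its raw coefficients sum to $1+O(1/\sqrt K)$, and rescaling perturbs $\mathcal{R}$ by $O(B/\sqrt K)$ using $|l|\le B$ — yields $\mathcal{R}[\tilde\phi]\le R^\star+2\varepsilon/6$ with probability $\ge1-\eta/3$; the quantitative $K$-rate here is the one established for Lemma~\ref{thm2}, giving $K=O(1/\varepsilon^4)$ in the attained case.

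\emph{Estimation and Monte Carlo.} Conditionally on the $\theta_k$, the class $\hat{\mathscr{F}}_\theta$ is the convex hull of the $K$ coordinate maps $x\mapsto g(x;\theta_k)$, each valued in $\Delta\subseteq[0,1]^M$, so its empirical Rademacher complexity obeys a bound that is at most linear in $K$ times $n^{-1/2}$; composing with the $L$-Lipschitz, $B$-bounded loss via the vector-contraction inequality and applying a bounded-differences inequality bounds $\sup_{\phi\in\hat{\mathscr{F}}_\theta}\big|\mathcal{R}[\phi]-\overline{\mathcal{R}}_{emp}[\phi]\big|$ by $\mathrm{poly}(L,B,M)\cdot\big(K n^{-1/2}+\sqrt{\log(1/\eta)/n}\big)$, which $n=\Omega(K^2/\varepsilon^2)$ makes $\le\varepsilon/6$. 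For the Monte Carlo term, Hoeffding's inequality bounds each $\big|\hat g_m(x_i;\theta_k)-g_m(x_i;\theta_k)\big|$ by $O\big(\sqrt{\log(nKM/\eta)/s}\big)$, and a union bound over the $nKM$ indices makes this simultaneous; since $\sum_k w_k=1$, for every $w\in\Delta_K$ and every $i$, $\big\|\sum_k w_k\hat g(x_i;\theta_k)-\sum_k w_k g(x_i;\theta_k)\big\|_\infty\le\max_{i,k}\big\|\hat g(x_i;\theta_k)-g(x_i;\theta_k)\big\|_\infty$, so $L$-Lipschitzness gives $\sup_{\phi\in\hat{\mathscr{F}}_\theta}\big|\overline{\mathcal{R}}_{emp}[\phi]-\mathcal{R}_{emp}[\phi]\big|=O\big(L\sqrt{\log(nKM)/s}\big)$ on this event, which is $\le\varepsilon/6$ once $s=\Omega(\log(Kn)/\varepsilon^2)$.

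\emph{Combining, and the main obstacle.} On the intersection of the three events, using that $\hat\phi$ minimizes $\mathcal{R}_{emp}$ over $\hat{\mathscr{F}}_\theta$ while $\tilde\phi\in\hat{\mathscr{F}}_\theta$ is feasible,
\begin{align*}
\mathcal{R}[\hat\phi]&\le\overline{\mathcal{R}}_{emp}[\hat\phi]+\tfrac{\varepsilon}{6}\le\mathcal{R}_{emp}[\hat\phi]+\tfrac{2\varepsilon}{6}\\
&\le\mathcal{R}_{emp}[\tilde\phi]+\tfrac{2\varepsilon}{6}\le\overline{\mathcal{R}}_{emp}[\tilde\phi]+\tfrac{3\varepsilon}{6}\\
&\le\mathcal{R}[\tilde\phi]+\tfrac{4\varepsilon}{6}\le R^\star+\varepsilon,
\end{align*}
the last inequality being the approximation bound $\mathcal{R}[\tilde\phi]\le R^\star+2\varepsilon/6$; a union bound gives overall probability $\ge1-\eta$, and $K=O(1/\varepsilon^4)$ in the attained case follows from the approximation step. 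I expect the approximation step to be the main obstacle: one must (i) produce an approximant that genuinely lies in the simplex-constrained class $\hat{\mathscr{F}}_\theta$, which forces the renormalization and its error accounting, and (ii) upgrade the in-expectation decay of $\mathcal{R}[\tilde\phi]-\mathcal{R}[\phi_0]$ to a high-probability bound with an explicit $K$-rate while letting the bounded Lipschitz risk inherit the $L_2$ approximation error — this bookkeeping is what pins down the stated rates and must compose cleanly with the $n$- and $s$-dependent terms under a single union bound. The remaining ingredients (Rademacher/covering complexity of a $K$-simplex, Hoeffding's inequality) are routine.
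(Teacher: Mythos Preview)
Your proposal is correct and follows essentially the same three-part decomposition as the paper (approximation via the Rahimi--Recht random-feature device with a simplex renormalization, uniform estimation error via the Rademacher/Gaussian complexity of the $K$-simplex class, and a Monte Carlo error controlled by concentration plus a union bound), combined through the same telescoping chain using optimality of $\hat\phi$. The only minor differences are that the paper uses McDiarmid-based concentration throughout (its Lemmas~1, 2, and~10) where you invoke Markov and coordinate-wise Hoeffding, and the paper chains through the true-risk minimizer $\phi^*\in\hat{\mathscr{F}}_\theta$ rather than your explicit approximant $\tilde\phi$; neither affects the argument or the stated rates.
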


Theorem \ref{thm3} gives a guarantee that, for large enough $K,n,s$, the gap between the risk of the empirical risk minimizer $\hat{\phi}$ and $\inf_{\phi\in\mathscr{F}_{p}}\mathcal{R}[\phi]$ can be arbitrarily small with high probability. Note that we can solve $\hat{\phi}$ to any given precision when $l$ is convex. %Given a loss function $l$ that can approximate the $\gamma$-robustness index well, Theorem \ref{thm3} indicates that one can train a near-optimal $\hat{\phi}$ in Theorem \ref{thm2}. 
Moreover, Theorem~\ref{thm3} reveals the accessibility of $\hat{\phi}$ in Lemma~\ref{thm2} when $l$ approximates the $\gamma$-robustness index well. While the number of candidate models and the number of training samples need to be large to ensure good theoretical properties, we will show that the performance of SWEEN models of practical settings is good enough in Section \ref{exp}.

%Theorem \ref{thm3} applies even if we randomly choose $\theta_k$, e.g. using randomly initialized networks, or use shallow network architectures. However, empirically the use of finely-tuned networks with morden architectures as base classifiers for ensemble is favorable and produces much better results for fixed $K,n$ and $s$.

\subsection{Adaptive prediction algorithm}
%A major drawback of ensembling is its higher computation cost, including training cost and execution cost, which is severer for randomized smoothed classifiers. If we have some pre-trained base functions for ensembling, solving an appropriate weight requires no heavy computation. But more commonly, we have to train candidate models from scratch, which is very computation-consuming when the number of candidate models is large. To make things worse, some training algorithms designed for randomized smoothing, e.g., \citet{salman2019provably}, are highly costly. To cope with the problem of the high training cost, we employ the idea of \emph{checkpoint ensembling}, which typically collects multiple checkpoints from a single training process as candidate models. We adopt two checkpoint ensembling schemes, namely, snapshot ensembling (SSE) \cite{huang2017snapshot} and fast geometric ensembling (FGE) \cite{garipov2018loss}, to make these checkpoints more accurate and diverse.

A major drawback of ensembling is the high execution cost during inference, which consists of prediction and certification costs for smoothed classifiers. The evaluation of smoothed classifiers relies on Monte Carlo simulation, which is computationally expensive. For instance, \citet{cohen2019certified} use 100 Monte Carlo samples for prediction and 100,000 samples for certification. If we use 100 candidate models to construct a SWEEN model, the certification of a single data point will require 10,010,000 local evaluations (10,000 for prediction and 10,000,000 for certification).  \citet{inoue2019adaptive} observes that ensembling does not make improvements for inputs predicted with high probabilities even when they are mispredicted. He proposes an adaptive ensemble prediction algorithm to reduce the execution cost of unweighted ensemble models. We modify the algorithm to make it applicative to weighted ensemble models, which is detailed in Algorithm \ref{alg:Framwork}. For a data point, classifiers are evaluated in descending order with respect to their weights. Whenever an early-exit condition is satisfied, we stop the evaluation and return the current prediction.

\begin{algorithm}[htb]
\caption{ Adaptive prediction for weighted ensembling} 
\label{alg:Framwork} 
\begin{algorithmic}[1] 
\STATE {\bfseries Input:}  Ensembling weight $w\in\mathbb{R}^K$, candidate model parameters $\theta\in\Theta^K$, significance level $\alpha$, threshold $T$, data point $x$
\STATE Compute $z=\Phi^{-1}(1-\frac{\alpha}{2})$
\STATE Set $\pi$ as the permutation of indices that sorts $w$ in descending order and $i\leftarrow 0$
\REPEAT
\STATE Set $i\leftarrow i+1$
\STATE Compute the $w_{\pi_i}$-th local prediction \\$p_{\pi_i} \leftarrow (p_{\pi_i,1},\cdots,p_{\pi_i,M})\in\Delta$
\STATE Compute $\hat{p}_{i,k} \leftarrow \frac{\sum_{j=1}^i w_{\pi_j}p_{\pi_j,k}}{\sum_{j=1}^i w_{\pi_j}}$ for $k=1,2,\cdots M$
\STATE Compute $k_{i} \leftarrow \mathop{\arg\max}_k\hat{p}_{i,k}$
\UNTIL{$\hat{p}_{1,k_1}>T$  or for some $1<i<K$, $\hat{p}_{i,k_i}>\frac{1}{2}+$ \\$z\frac{\sqrt{\sum_{j=1}^i w_{\pi_j}^2}}{\sum_{j=1}^i w_{\pi_j}}\sqrt{\frac{\sum_{j=1}^iw_{\pi_j}(p_{\pi_j,k_i}-\hat{p}_{i,k_i})^2}{\sum_{j=1}^i w_{\pi_j}}}$  or  $i=K$} 
\STATE {\bfseries return} $k_i$ and $\hat{p}_{i,k},k=1,2,\cdots M$
\end{algorithmic} 
\end{algorithm}

%We use a fixed threshold $T$ for the early-exit condition of the first classifier to avoid loose exit with a single classifier. In the multiple-classifier situation, a hypothesis-test-style early-exit condition is applied, and $\alpha$ is the significance level that controls the strength of the condition. Usually $T$ is chosen conservatively, such as $0.999$, and $\alpha$ is typically $0.05$ or $0.01$. According to our experiments, it greatly reduces the prediction and certification cost of smoothed ensemble classifiers at a slight sacrifice of performance.

\begin{table*}[tbp]
\caption{ACA (\%) and ACR on CIFAR-10. All models are trained via the standard training. UE stands for the upper envelope, which shows the largest ACA and ACR among the candidate models.}
\label{tab1}
\vskip 0.15in
\begin{center}
\begin{small}
\begin{sc}
  \begin{tabular}{llllllllllll}
    \toprule
    $\sigma$&Model& 0.00&0.25&0.50&0.75&1.00&1.25&1.50&1.75&2.00&ACR\\
    \midrule
    \multirow{5}{*}{0.25}
&ResNet-110&79.6& 65.2&50.8&34.4	&0&0&0&0&0&0.489 \\
&UE-3 &80.5& 65.6&47.9&30.2	&0&0&0&0&0&0.470 \\
&SWEEN-3&82.3&69.8&54.7&35.9	&0&0&0&0&0&0.520 \\
&UE-6 &81.5& 67.5&50.3&34.2	&0&0&0&0&0&0.491 \\
&SWEEN-6 &\textbf{84.2}& \textbf{73.2}&\textbf{59.5}&\textbf{42.4}	&0&0&0&0&0&\textbf{0.560} \\
    \midrule
    \multirow{5}{*}{0.50}
&ResNet-110&68.7& 58.6&46.7&35.4&25.0&17.0&9.0&4.6&0&0.573 \\
&UE-3 &69.6& 58.3&45.8&33.7&23.0&15.8&9.2&4.7&0&0.556 \\
&SWEEN-3 &70.9&61.4&50.8&38.3&27.7&20.1&12.8&6.7&0&0.630\\
&UE-6 &69.6& 59.4&47.7&34.9&24.7&16.5&10.7&4.9&0&0.574 \\
&SWEEN-6 &\textbf{71.7}& \textbf{63.1}&\textbf{53.9}&\textbf{41.6}&\textbf{31.0}&\textbf{22.4}&\textbf{15.3}&\textbf{8.4}&0&\textbf{0.680} \\
    \midrule
    \multirow{5}{*}{1.00}
&ResNet-110&51.4&44.9&37.9&31.8&24.6&18.8&13.8&10.2&6.7&0.559 \\
&UE-3 &50.6& 44.7&38.2&30.8&24.6&18.5&13.6&10.5&7.0&0.555 \\
&SWEEN-3 &51.9& 45.5&39.3&32.3&25.9&19.7&15.4&11.4&8.1&0.595 \\
&UE-6 &52.2&44.4&38.3&31.3&25.4&19.4&14.4&10.6&7.4&0.563 \\
&SWEEN-6 &\textbf{53.2}& \textbf{46.7}&\textbf{39.9}&\textbf{33.6}&\textbf{27.3}&\textbf{22.1}&\textbf{16.9}&\textbf{12.5}&\textbf{9.2}&\textbf{0.626} \\
    \bottomrule
  \end{tabular}
\end{sc}
\end{small}
\end{center}
\vskip -0.1in
\end{table*}

\section{Experiments}\label{exp}
In this section, we design extensive experiments on CIFAR-10 and ImageNet to evaluate the performance of SWEEN models. 
\subsection{Setup}
\textbf{Model setup}\quad {We train different network architectures on CIFAR-10 to serve as candidates for SWEEN, including ResNet-20 \citep{he2016deep}, ResNet-26, ResNet-32, ResNet-50, ResNet-80 and ResNet-110. We particularly evaluate two compositions of SWEEN models. The first is a relatively rich set of models, including all six candidate models, denoted by the SWEEN-6 model.} The second is a small set of small models, including ResNet-20, ResNet-26, ResNet-32, denoted by the SWEEN-3 model. The SWEEN-6 model and the SWEEN-3 model simulate how much SWEEN can help in scenarios when we have adequate and limited numbers of candidate models, respectively. On ImageNet, we train three candidate models including ResNet-18, ResNet-34 and ResNet-50. We evaluate the SWEEN model containing the three models, denoted by the SWEEN-IN model. A SWEEN model and its candidate models are trained and evaluated with the same noise level $\sigma$. The detailed algorithm for obtaining a SWEEN model is presented in Appendix B.

\textbf{Candidate model Training}\quad We train candidate models using two training schemes, including Gaussian data augmentation training \citep{cohen2019certified}, which is denoted as the standard training for simplicity, and MACER training \citep{zhai2020macer}. All hyper-parameters used in our experiments are listed in Appendix C.1.

\textbf{Solving the ensembling weight}\quad From Section \ref{sec4} we know that we can obtain the empirical risk minimizer by solving a convex optimization. However, this requires first to approximate the value of smoothed functions of candidate models at every data point, which can be very costly when the number of candidates or training data samples is large. Hence, we use Gaussian data augmented training to solve the ensembling weight. More precisely, we freeze the parameters of candidate models and minimize the cross-entropy loss of the SWEEN model on Gaussian augmented data from the evaluation set. Empirically we find that this approach is much faster and yields comparable results. 

\textbf{Certification}\quad Following previous works, we report the \emph{approximated certified accuracy} (ACA), which is the fraction of the test set that can be certified to be robust at radius $r$ approximately (see \cite{cohen2019certified} for more details). We also report the \emph{average certified radius} (ACR) following \citet{zhai2020macer}. The ACR equals to the area under the radius-accuracy curve (see Figure \ref{fig1}). All results were certified using algorithms in \cite{cohen2019certified} with $N = 100,000$ samples and failure probability $\alpha=0.001$.

%We divide the training set into two parts, the training part for training candidate models, and the evaluation part for solving weights. Empirically we find this division performs better than using the full training set for both training candidate models and solving weights. We employ 2,000 images for evaluation on CIFAR-10 and 3000 images for evaluation on SVHN. Models not for ensembling are trained on the full training set.
\subsection{Results}

\textbf{Standard training on CIFAR-10}\quad Table \ref{tab1} displays the performance of two kinds of SWEEN models under noise levels $\sigma\in\{0.25,0.50,1.00\}$. The performance of a single ResNet-110 is included for comparison, and we also report the upper envelopes of the ACA and ACR of their corresponding candidate models as UE. In Figure \ref{fig1}, we display the radius-accuracy curves for the SWEEN models and all their corresponding candidate models under $\sigma=0.50$ on CIFAR-10.

\begin{figure}[!htb]
\begin{center}
\subfigure{
\includegraphics[scale=0.245]{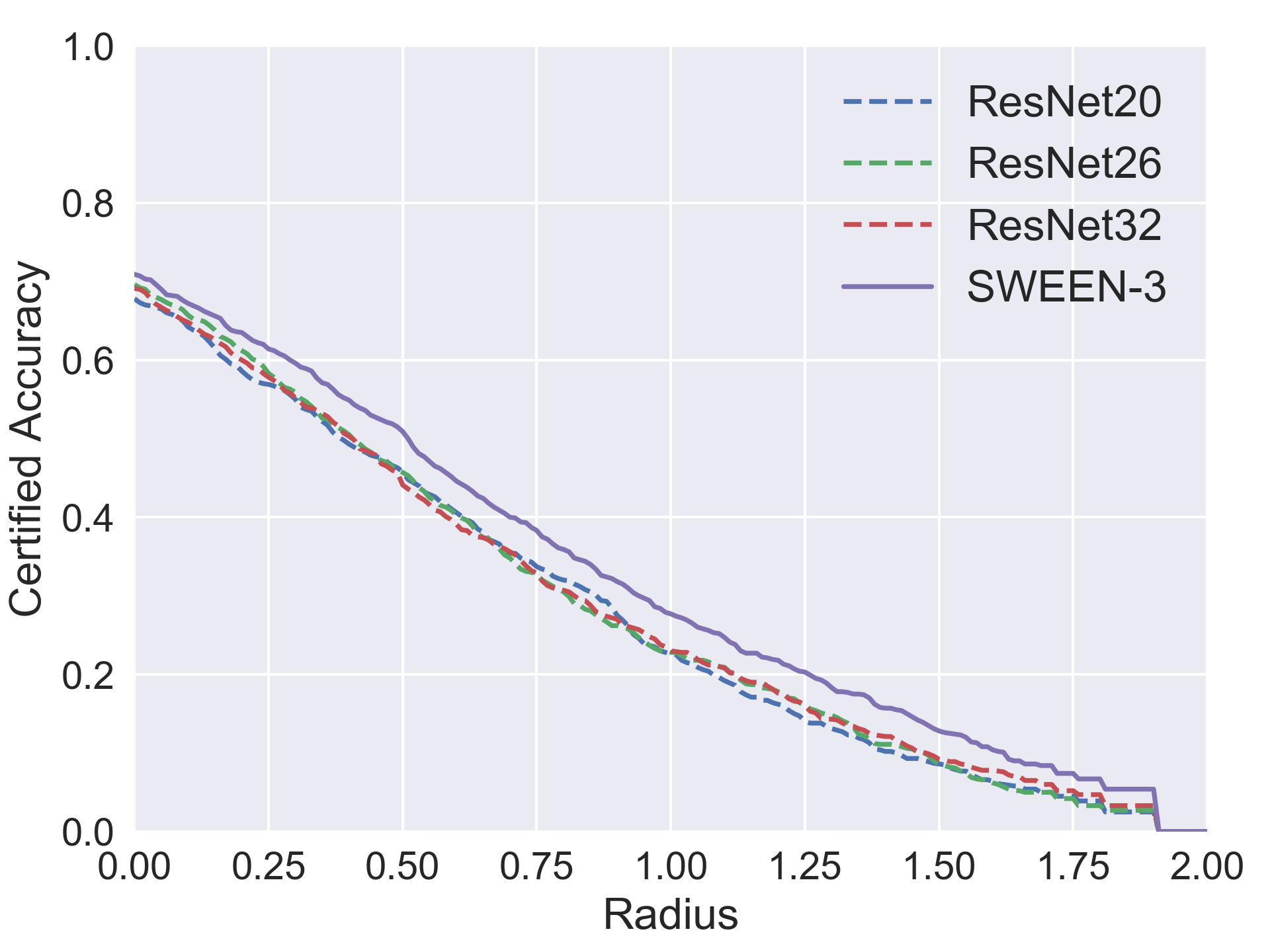}}
\subfigure{
\includegraphics[scale=0.245]{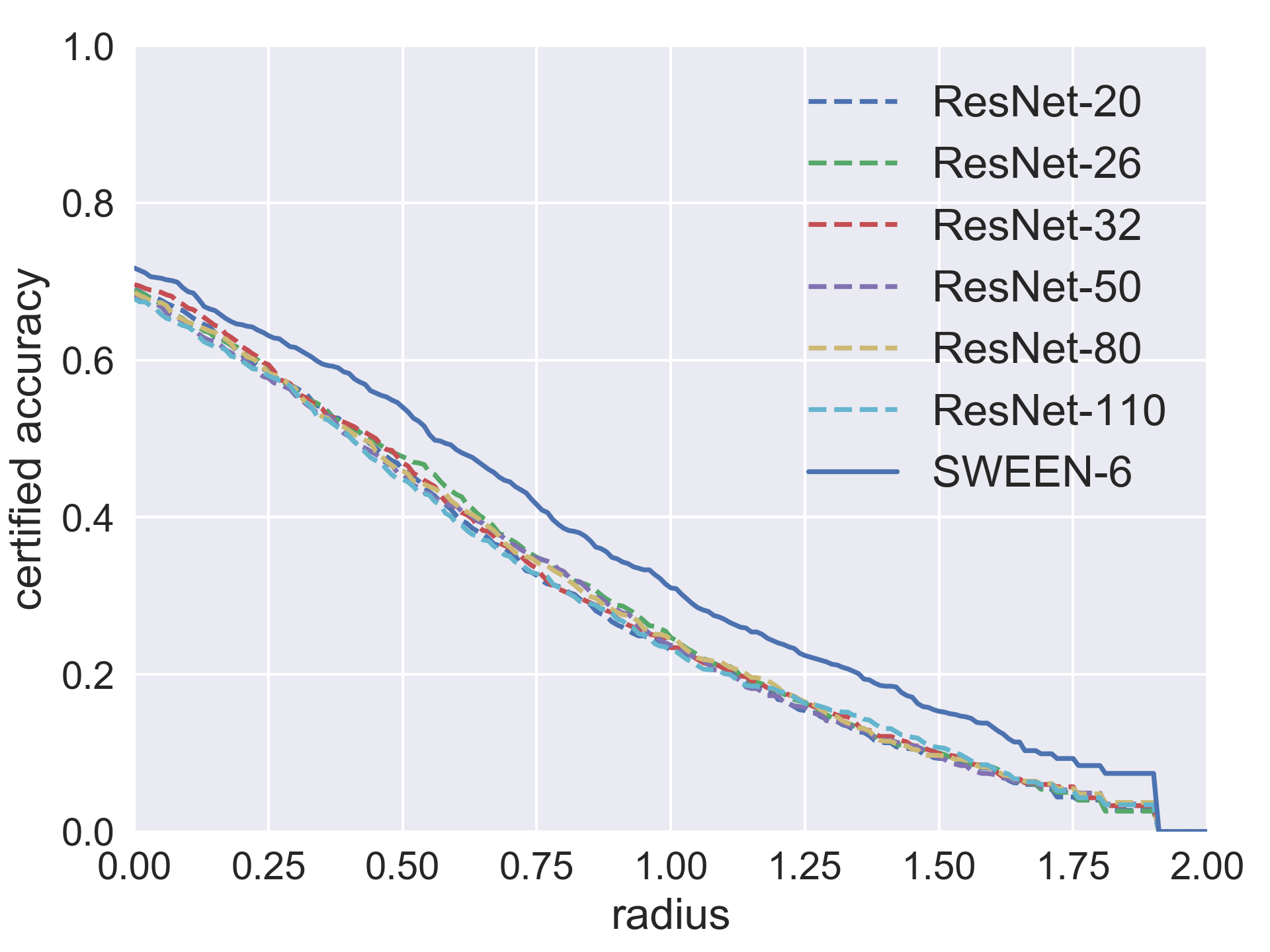}}
\end{center}
\vspace{-10pt}
  \caption{Radius-accuracy curves under $\sigma=0.50$. All models are trained via the standard training. (\textbf{Left}) The SWEEN-3 model and all its candidate models. (\textbf{Right}) The SWEEN-6 model and all its candidate models.}
\label{fig1}
\end{figure}

The results show that SWEEN models significantly boost the performance compared to their corresponding candidate models. According to Figure \ref{fig1},  the SWEEN-6 model consistently outperforms all its candidates in terms of the ACA at all radii. The ACR of the SWEEN-6 model is 0.680, much higher than that of the upper envelope of the candidates, which is 0.574. It confirms our theoretical analysis in Section \ref{sec4} that SWEEN can combine the strength of candidate models and attain superior performance. Besides, SWEEN is effective when only limited numbers of small candidate models are available. The SWEEN-3 model using ResNet-20, ResNet-26, and ResNet-32 achieves higher ACA than the ResNet-110 at all radii on all noise levels. The total training time and the number of parameters of the SWEEN-3 model are  36 \% and 30\% less than those of ResNet-110, respectively. The improvements can be further amplified by increasing the number and size of candidate models. As an instance, the ACR of the SWEEN-6 model is at least 13\% higher than that of the ResNet-110 in Table \ref{tab1}. 
The above results verify the effectiveness of SWEEN for randomized smoothing.

\begin{table}[!t]
\caption{Training time, \#parameters and \#FLOPs for models under $\sigma = 0.50$ via MACER training. All the experiments are run on a single NVIDIA 1080 Ti GPU.}
\label{tab5}
\vskip 0.15in
\begin{center}
\begin{small}
\begin{sc}
  \begin{tabular}{llll}
    \toprule
 Model&Total hrs&\#parameters&\#FLOPs\\
    \midrule
ResNet-110&49.4&1.73M&255.27M\\
\midrule
ResNet-20&8.8&0.27M&41.21M\\
ResNet-26&11.3&0.36M&55.48M\\
ResNet-32&13.8&0.46M&69.75M\\
\midrule
Weight &0.025&-&-\\
Ensemble&33.9&1.10M&166.44M\\
    \bottomrule
  \end{tabular}
\end{sc}
\end{small}
\end{center}
\vskip -0.1in
\end{table}

\begin{table*}[!t]
\caption{ACA (\%) and ACR on CIFAR-10. All models are trained via MACER training. UE stands for the upper envelope of candidate models.}
\label{tabma}
\vskip 0.15in
\begin{center}
\begin{small}
\begin{sc}
  \begin{tabular}{llllllllllll}
    \toprule
    $\sigma$&Model& 0.00&0.25&0.50&0.75&1.00&1.25&1.50&1.75&2.00&ACR\\
    \midrule
    \multirow{5}{*}{0.25}
&ResNet-110&\textbf{81}& 71&59&43	&0&0&0&0&0&0.556 \\
&UE-3&77.4& 66.9&56.8&41.9	&0&0&0&0&0&0.529 \\
&SWEEN-3&77.7&68.7&60.3&46.6	&0&0&0&0&0&0.558 \\
&UE-6&80.7& 69.4&56.6&41.8	&0&0&0&0&0&0.540 \\
&SWEEN-6&80.7&\textbf{71.5}&\textbf{61.3}&\textbf{48.1}	&0&0&0&0&0&\textbf{0.578} \\
    \midrule
    \multirow{5}{*}{0.50}
&ResNet-110&66& 60&53&46&38&29&19&12&0&0.726 \\
&UE-3&64.9& 57.1&49.7&41.1&34.1&26.2&20.2&11.7&0&0.685 \\
&SWEEN-3&64.7&58.4&51.8&43.9&37.2&29.2&\textbf{22.8}&\textbf{14.6}&0&0.727\\
&UE-6&65.4& 58.5&51.8&44.4&35.8&28.0&19.9&11.3&0&0.701 \\
&SWEEN-6&\textbf{67.0}&\textbf{60.3}&\textbf{53.2}&\textbf{46.8}&\textbf{38.8}&\textbf{30.3}&22.6&\textbf{14.6}&0&\textbf{0.751}\\
\midrule
\multirow{5}{*}{1.00} & ResNet-110 & \textbf{45} & \textbf{41} & 38 & \textbf{35} & \textbf{32} & 29 & 25 & 22 & 18 & 0.792\\
& UE-3 & 39.4 & 38.2 & 35.8 & 33.4 & 30.3 & 27.6 & 24.5 & 22.0 & 19.0 & 0.793 \\
& SWEEN-3 & 39.5 & 37.9 & 35.8 & 33.2 & 30.4 & 27.5 & 24.6 & 22.0 & 19.1 & 0.796 \\
& UE-6 & 39.4 & 38.2 & 35.8 & 33.4 & 30.3 & 27.6 & 24.5 & 22.3 & 19.5 & 0.793 \\  
& SWEEN-6 & 42.7 & 40.4 & \textbf{38.1} & 34.6 & \textbf{32.0} & \textbf{29.1} & \textbf{26.4} & \textbf{24.0} & \textbf{20.1} & \textbf{0.816} \\
    \bottomrule
  \end{tabular}
\end{sc}
\end{small}
\end{center}
\vskip -0.1in
\end{table*}

\begin{table*}[!htb]
\caption{ACA (\%) and ACR on ImageNet. All models are trained via standard training.}
\label{tabin}
\vskip 0.15in
\begin{center}
\begin{small}
\begin{sc}
  \begin{tabular}{llllllllllll}
    \toprule
    $\sigma$&Model& 0.00&0.25&0.50&0.75&1.00&1.25&1.50&1.75&2.00&ACR\\
    \midrule
    \multirow{2}{*}{0.25}
&ResNet-50&66.8&58.2& 49.0&38.2&0&0&0&0&0&0.469 \\
&SWEEN-IN&\textbf{67.8}&\textbf{60.2}&\textbf{51.6}&\textbf{41.6}&0&0&0&0&0&\textbf{0.489}\\
    \midrule
    \multirow{2}{*}{0.50}
&ResNet-50&\textbf{56.4}&\textbf{52.4}& 46.4&42.2&37.8&32.6&28.0&21.4&0&0.726 \\
&SWEEN-IN&\textbf{56.4}&\textbf{52.4}&\textbf{49.6}&\textbf{45.0}&\textbf{40.8}&\textbf{37.0}&\textbf{31.6}&\textbf{25.2}&0&\textbf{0.781}\\
    \midrule
    \multirow{2}{*}{1.00}
&ResNet-50&43.6&40.6& 37.8&35.4&32.4&28.8&25.8&22.4&19.4&0.863 \\
&SWEEN-IN&\textbf{44.6}&\textbf{42.0}&\textbf{38.6}&\textbf{36.4}&\textbf{35.0}&\textbf{32.6}&\textbf{29.4}&\textbf{25.6}&\textbf{22.4}&\textbf{0.948}\\
    \bottomrule
  \end{tabular}
\end{sc}
\end{small}
\end{center}
\vskip -0.1in
\end{table*}

\textbf{MACER training on CIFAR-10} \quad Since SWEEN is compatible with previous training algorithms, we adopt MACER training for the SWEEN-3 model. The results are summarized in Table \ref{tabma}. For the ACA and ACR of the ResNet-110 model, we use the original numbers from \citet{zhai2020macer}. 

In the results, the SWEEN-3 model achieves comparable results to the ResNet-110 but is more efficient. 
%The total number of parameters of the SWEEN-3 model is only 64\% of that of the ResNet-110.
From Table \ref{tab5} and \ref{tabma}, the SWEEN-3 model takes 33.9 hours to achieve 0.727 in terms of the ACR for $\sigma=0.5$, using three small and easy-to-train candidates models. Meanwhile, it takes 49.4 hours for the ResNet-110 to achieve similar performance on CIFAR-10. This 32\% speed up reveals the efficiency of applying SWEEN to previous training methods. Moreover, the  SWEEN-6 models outperform the ResNet-110 model on ACA of most radius levels and on ACR by a large margin. To our best knowledge, it establishes a new state-of-the-art certified robustness on CIFAR-10 under $\ell_2$ norm.

%Besides, the ensemble of small models performs differently from the large model. For $\sigma=0.25$, SWEEN-3 achieves higher robust accuracy when $r=0.5, 0.75$.
%Thus, applying SWEEN to sophisticatedly custom-trained classifiers can further increase the performance.  Besides, we note that SWEEN models have higher ACA at large radii compared to the ResNet-110. 

%Moreover, the performance of SWEEN-3 is different from that of ResNet-110 used in \cite{zhai2020macer}. For $\sigma=0.5$, SWEEN-3 achieves higher robust accuracy when $r=1.25/1.5/1.75$. 

\textbf{Results on ImageNet} \quad Table \ref{tabin} displays the performance of ResNet-50 and SWEEN-IN under the noise levels $\sigma\in\{0.25, 0.50, 1.00\}$. We note that the performance of ResNet-50 is also the upper envelope of the three models in SWEEN-IN. We can see that the SWEEN-IN model significantly outperforms its corresponding candidate models, similar to the results on CIFAR-10. The results again confirm the effectiveness of our proposed SWEEN framework.

\begin{table*}[!htb]
\caption{ACA (\%) and ACR on CIFAR-10. All candidate models are ResNet-110s trained via the standard training. UE stands for the upper envelope, which shows the largest ACA and ACR among the candidate models. AVG stands for the average ACA or ACR of candidate models.}
\label{tab2}
\vskip 0.15in
\begin{center}
\begin{small}
\begin{sc}
  \begin{tabular}{llllllllllll}
    \toprule
    $\sigma$&Model& 0.00&0.25&0.5&0.75&1.00&1.25&1.50&1.75&2.00&ACR\\
    \midrule
    \multirow{3}{*}{0.25}
&AVG &79.9& 67.2&50.3&33.5	&0&0&0&0&0&0.491 \\
&UE &80.4& 68.3&52.0&34.6	&0&0&0&0&0&0.496 \\
&SWEEN &\textbf{83.5}&\textbf{ 73.0}&\textbf{60.4}&\textbf{43.8}	&0&0&0&0&0&\textbf{0.572} \\
    \midrule
    \multirow{3}{*}{0.50}
&AVG &68.4&58.0&46.1&34.5&24.4&16.2&9.7&4.4&0&0.568 \\
&UE &69.7& 59.3&47.1&35.7&24.9&17.6&10.8&5.1&0&0.581 \\
&SWEEN &\textbf{71.3}&\textbf{63.3}&\textbf{53.1}&\textbf{44.0}&\textbf{32.6}&\textbf{22.9}&\textbf{15.8}&\textbf{9.2}&0&\textbf{0.691} \\
    \midrule
    \multirow{3}{*}{1.00}
&AVG &51.9&45.0&37.8&30.9&24.7&18.7&13.5&9.9&7.1&0.558 \\
&UE &53.0& 46.0&38.6&31.8&25.5&19.3&14.1&10.5&7.7&0.566 \\
&SWEEN &\textbf{54.1}&\textbf{47.3}&\textbf{41.0}&\textbf{34.7}&\textbf{27.6}&\textbf{22.9}&\textbf{16.7}&\textbf{12.2}&\textbf{9.2}&\textbf{0.623} \\
    \bottomrule
  \end{tabular}
\end{sc}
\end{small}
\end{center}
\vskip -0.1in
\end{table*}

\begin{figure*}[!ht]
\begin{center}
\subfigure{
\includegraphics[scale=0.28]{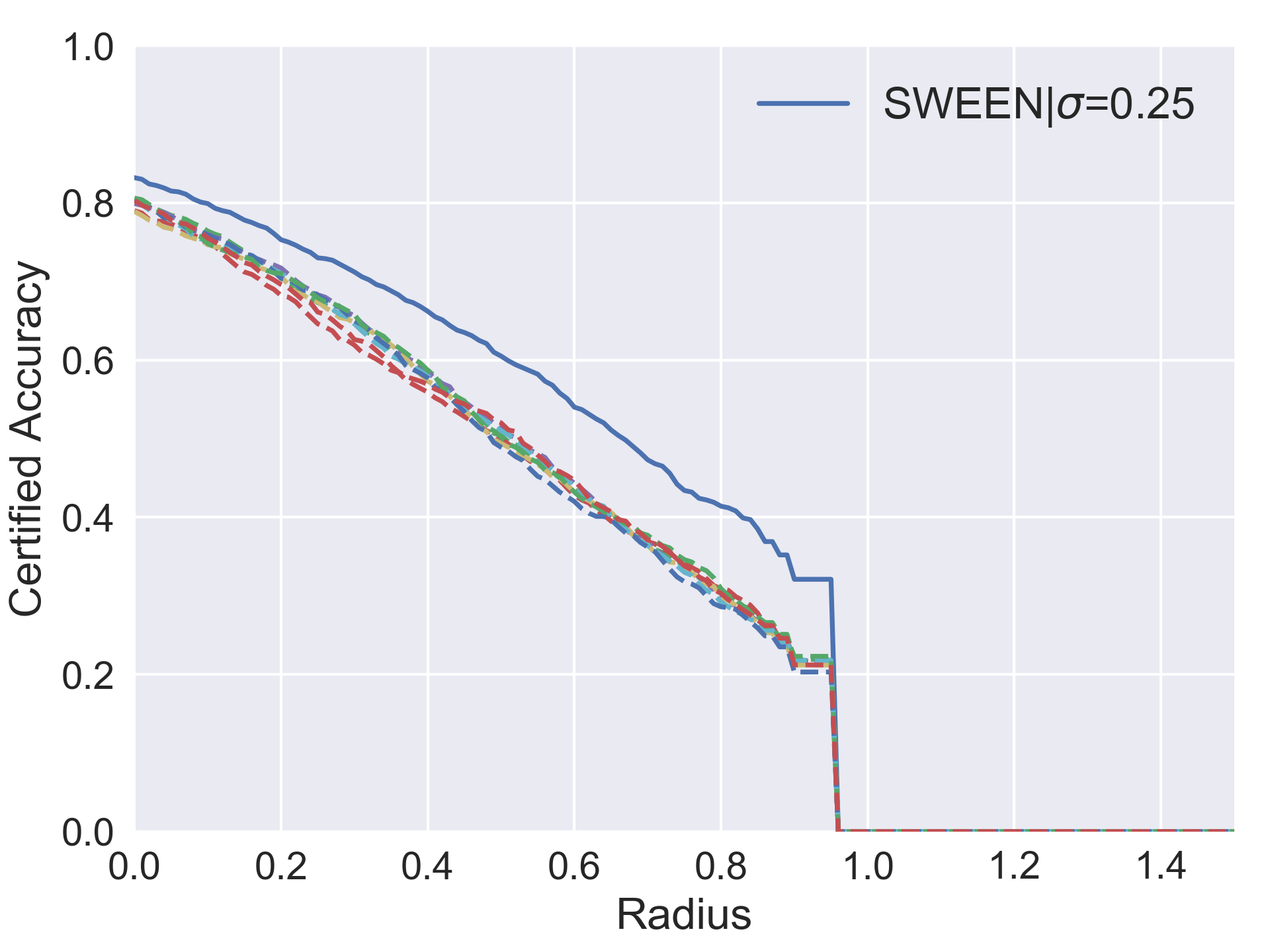}}
\subfigure{
\includegraphics[scale=0.28]{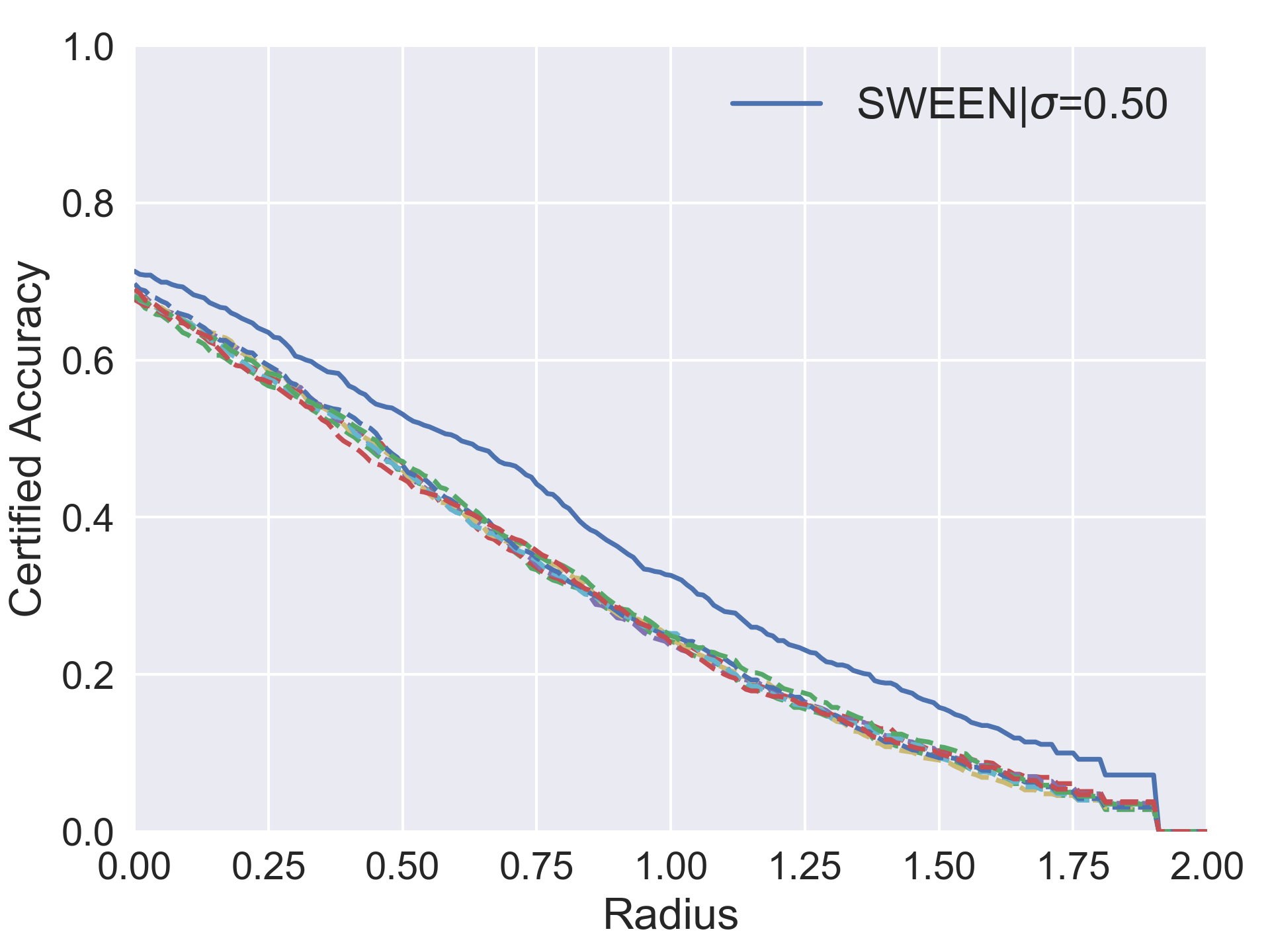}}
\subfigure{
\includegraphics[scale=0.28]{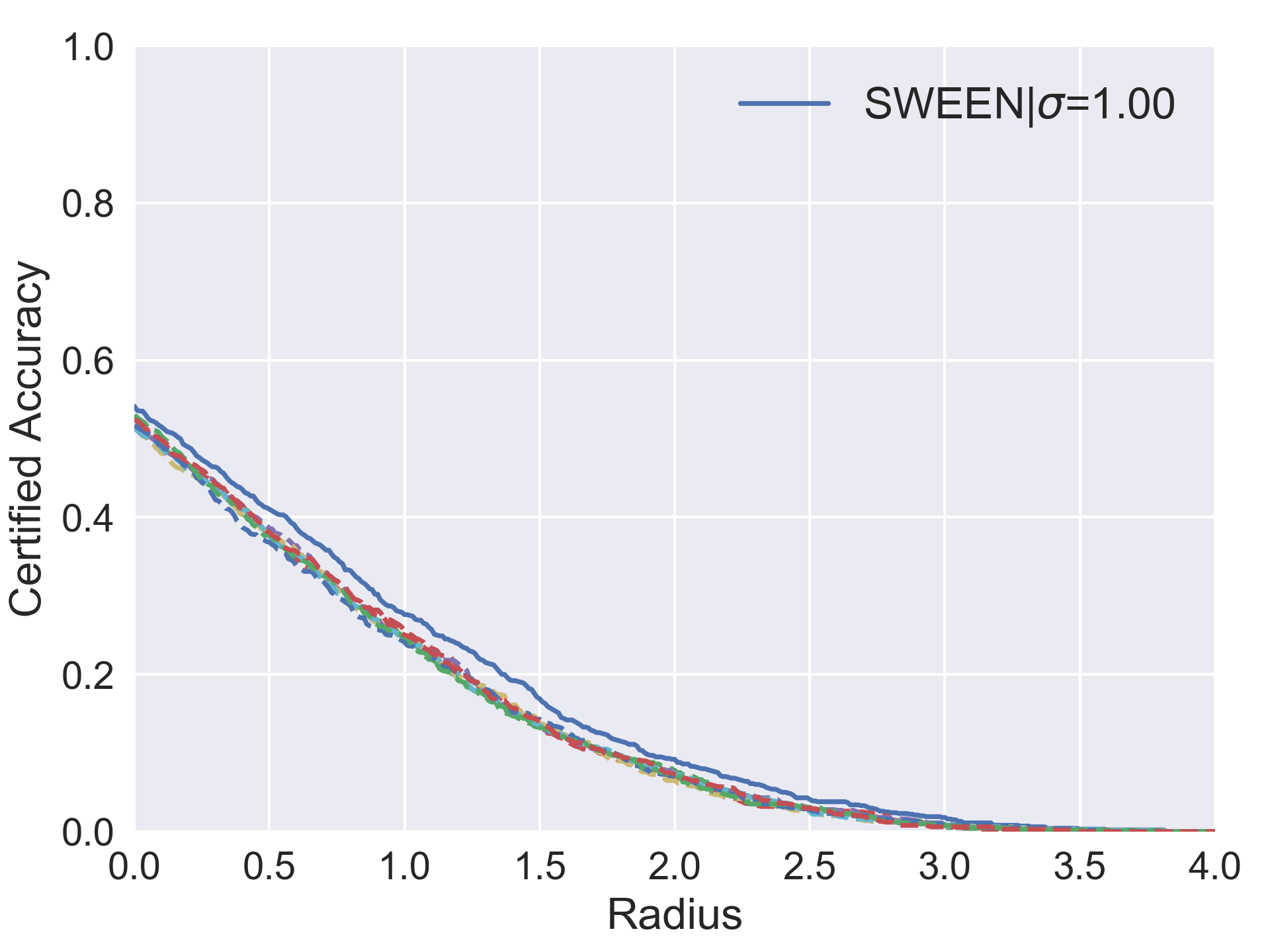}}
\end{center}
  \caption{Radius-accuracy curves of SWEEN models and their candidate models. All candidate models are using the ResNet-110 architecture and trained via the standard training. (\textbf{Left}) $\sigma = 0.25$. (\textbf{Middle}) $\sigma = 0.50$. (\textbf{Right}) $\sigma = 1.00$.}
\label{fig3}
\end{figure*}

\begin{table*}[!hbt]
\caption{ACA (\%) and ACR on ImageNet. All candidate models are ResNet-50s trained via the standard training. The SWEEN model here contains 3 ResNet-50s. UE stands for the upper envelope, which shows the largest ACA and ACR among the candidate models. AVG stands for the average ACA or ACR of candidate models.}
\label{tab:imgnet-3xrn50}
\vskip 0.15in
\begin{center}
\begin{small}
\begin{sc}
  \begin{tabular}{llllllllllll}
    \toprule
    $\sigma$&Model& 0.00&0.25&0.5&0.75&1.00&1.25&1.50&1.75&2.00&ACR\\
    \midrule
    \multirow{3}{*}{0.50}
&AVG &56.8&52.1&46.1&42.3&37.5&32.8&28.1&21.6&0&0.723 \\
&UE &{57.2}&{52.4}& 46.4&42.4&37.8&33.0&28.2&21.8&0&0.727 \\&SWEEN &\textbf{60.0}&\textbf{55.2}&\textbf{50.2}&\textbf{46.0}&\textbf{42.6}&\textbf{37.8}&\textbf{33.2}&\textbf{28.4}&0&\textbf{0.816}\\
    \bottomrule
  \end{tabular}
\end{sc}
\end{small}
\end{center}
\vskip -0.1in
\end{table*}

\begin{table}[hbt]
\caption{ACA (\%) and ACR on CIFAR-10. All models are trained via the standard training.}
\label{tab3}
\vskip 0.15in
\begin{center}
\begin{small}
\begin{sc}
  \begin{tabular}{llll}
    \toprule
    $\sigma$&Model&ACR&\#evals/img\\
    \midrule
    \multirow{2}{*}{0.25}
&Normal&0.560&600,600\\
&Adaptive&0.547&246,981\\
    \midrule
    \multirow{2}{*}{0.50}
&Normal&0.680&600,600 \\
&Adaptive&0.672 &349,805\\
    \midrule
    \multirow{2}{*}{1.00}
&Normal&0.626&600,600 \\
&Adaptive&0.624 &431,148\\
    \bottomrule
  \end{tabular}
\end{sc}
\end{small}
\end{center}
\vskip -0.1in
\end{table}

\textbf{SWEEN models using candidates with identical architectures} \quad The SWEEN-3 and SWEEN-6 models are all using candidate models with diverse architectures. For a more comprehensive result, we also experiment with SWEEN models using candidates with identical architectures. For $\sigma \in\{0.25,0.5,1.0\}$, We train 8 ResNet-110 models using different random seeds on CIFAR-10 via the standard training. We then use these models to ensemble SWEEN models.

The results are shown in Table \ref{tab2} and Figure \ref{fig3}. We can see that SWEEN is still effective in this scenario and significantly boosts the performance compared to candidate models.

We also run experiments on ImageNet using models with identical structure but with different random initialization. We train 3 ResNet-50 on ImageNet via the standard training to ensemble the SWEEN models. Table \ref{tab:imgnet-3xrn50} shows the results. The improvement of SWEEN is substantial compared with the AVG and UE results.

\textbf{Adaptive prediction ensembling} \quad To alleviate the higher execution cost introduced by SWEEN, we apply the previously mentioned adaptive prediction algorithm to speed up the certification. Experiments are conducted on the SWEEN-6 models via the standard training on CIFAR-10 and the results are summarized in Table \ref{tab3}. It can be observed that the adaptive prediction successfully reduce the number of evaluations. However, the performance of the adaptive prediction models is only slightly worse than their vanilla counterparts.

\textbf{Other experimental results} \quad Due to space constraints, we only report the main results here. The results of further experiments can be found in Appendix C.

\section{Conclusions}
In this work, we introduced the smoothed weighted ensembling (SWEEN) to improve randomized smoothed classifiers in terms of both accuracy and robustness. We showed that SWEEN can achieve optimal certified robustness w.r.t. our defined $\gamma$-robustness index. Furthermore, we can obtain the optimal SWEEN model w.r.t. a surrogate loss from training under mild assumptions. We also developed an adaptive prediction algorithm to accelerate the prediction and certification process. Our extensive experiments showed that a properly designed SWEEN model was able to outperform all its candidate models by a significant margin consistently. We also achieve the state-of-the-art certified robustness on CIFAR-10. Moreover, SWEEN models using a few small and easy-to-train candidates could match or exceed a large individual model on performance with a notable reduction in total training time. Our theoretical and empirical results confirmed that SWEEN is a viable tool for improving the performance of randomized smoothing models.

\bibliography{bib}

\appendix
\section{Proofs}\label{pf}

\subsection{Proof of Lemma~\ref{thm2}}
Define
\begin{equation}
\begin{split}
\mathscr{F}'_{p} &= \left\{\phi(x)=\int_{\Theta}w(\theta)g(x;\theta)d\theta\Big| \right. \\
&\|\phi\|_p<\infty, w(\theta)\ge0\bigg\},
\end{split}
\end{equation}

\begin{equation}
\hat{\mathscr{F}}'_{\theta}=\left\{\phi(x) = \sum_{k=1}^Kw_kg(x;\theta_k)\Big| w_k\ge0\right\}.
\end{equation}
We have $\mathscr{F}_{p}\subseteq\mathscr{F}'_{p},\hat{\mathscr{F}}_{\theta}\subseteq\hat{\mathscr{F}}'_{\theta}$.
\begin{lemma}\label{lem1}
Let $\mu$ be any probability measure on $\mathbb{R}^d$. For $\phi:\mathbb{R}^d\to\mathbb{R}^M$, define the norm $\|\phi\|_\mu^2\triangleq\int_{\mathbb{R}^d}\|\phi(x)\|_2^2d\mu(x)$. Fix $\phi\in\mathscr{F}'_{p}$, then for any $\eta > 0$, with probability at least $1-\eta$ over $\theta_1,...,\theta_K$ drawn \emph{i.i.d.} from $p$, there exists $\hat{\phi}(x)=\sum\limits_{k=1}^Kc_kg(x;\theta_k)\in\hat{\mathscr{F}}'_{\theta}$ which satisfies
\begin{eqnarray}
\|\hat{\phi}-\phi\|_\mu \le\frac{\|\phi\|_p}{\sqrt{K}}(1+\sqrt{2\log\frac{1}{\eta}}).
\end{eqnarray}
\end{lemma}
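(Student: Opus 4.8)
The plan is to realize $\hat\phi$ as an importance-weighted Monte Carlo estimate of $\phi$ and then apply a bounded-differences inequality. Writing $\phi(x)=\int_\Theta w(\theta)g(x;\theta)\,d\theta$ with $w\ge 0$ and $\|\phi\|_p=\sup_\theta |w(\theta)/p(\theta)|<\infty$, I would define, for the sampled parameters $\theta_1,\dots,\theta_K$ drawn i.i.d.\ from $p$, the random function
\[
\hat\phi(x)=\sum_{k=1}^K c_k\, g(x;\theta_k),\qquad c_k=\frac{1}{K}\cdot\frac{w(\theta_k)}{p(\theta_k)}\ge 0 .
\]
Since each $c_k\ge0$, this genuinely lies in $\hat{\mathscr{F}}'_\theta$. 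Each summand $\frac{w(\theta_k)}{p(\theta_k)}g(\cdot;\theta_k)$ has expectation $\int_\Theta \frac{w(\theta)}{p(\theta)}g(\cdot;\theta)\,p(\theta)\,d\theta=\phi(\cdot)$, so $\mathbb{E}\,\hat\phi(x)=\phi(x)$ pointwise.

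Next I would control $\mathbb{E}\|\hat\phi-\phi\|_\mu^2$. By Fubini, $\mathbb{E}\|\hat\phi-\phi\|_\mu^2=\int_{\mathbb{R}^d}\mathbb{E}\|\hat\phi(x)-\phi(x)\|_2^2\,d\mu(x)$, and for each fixed $x$ the vector $\hat\phi(x)-\phi(x)$ is an average of $K$ i.i.d.\ mean-zero terms, so $\mathbb{E}\|\hat\phi(x)-\phi(x)\|_2^2=\tfrac1K\,\mathbb{E}\big\|\tfrac{w(\theta_1)}{p(\theta_1)}g(x;\theta_1)-\phi(x)\big\|_2^2\le\tfrac1K\,\mathbb{E}\big\|\tfrac{w(\theta_1)}{p(\theta_1)}g(x;\theta_1)\big\|_2^2$. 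Here I use the elementary bound $\|g(x;\theta)\|_2\le\|g(x;\theta)\|_1=1$ (since $g(x;\theta)\in\Delta$) together with $\tfrac{w(\theta_1)}{p(\theta_1)}\le\|\phi\|_p$ to conclude $\mathbb{E}\|\hat\phi(x)-\phi(x)\|_2^2\le\|\phi\|_p^2/K$. Integrating against $\mu$ and applying Jensen's inequality gives $\mathbb{E}\|\hat\phi-\phi\|_\mu\le\|\phi\|_p/\sqrt K$.

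Finally, for concentration I would view $F(\theta_1,\dots,\theta_K):=\|\hat\phi-\phi\|_\mu$ as a function of the i.i.d.\ inputs. Replacing a single $\theta_j$ by $\theta_j'$ changes $\hat\phi$ only in its $j$-th term, so by the triangle inequality for $\|\cdot\|_\mu$ the change in $F$ is at most $c_j\|g(\cdot;\theta_j)\|_\mu+c_j'\|g(\cdot;\theta_j')\|_\mu\le \tfrac{2\|\phi\|_p}{K}$, using again $\|g(x;\theta)\|_2\le1$. McDiarmid's inequality then yields $\Pr[F\ge\mathbb{E}F+t]\le\exp\!\big(-t^2K/(2\|\phi\|_p^2)\big)$; choosing $t=\|\phi\|_p\sqrt{2\log(1/\eta)/K}$ makes the right-hand side equal $\eta$, and combining with the mean bound of the previous paragraph gives $F\le \tfrac{\|\phi\|_p}{\sqrt K}\big(1+\sqrt{2\log(1/\eta)}\big)$ with probability at least $1-\eta$.

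I expect no serious obstacle: this is a Maurey-type random-sampling argument in the spirit of \citet{rahimi2008weighted}. The points requiring care are (i) ensuring the importance weights $c_k$ are nonnegative, which is exactly why the statement is phrased over the cone $\mathscr{F}'_p$ rather than the normalized set $\mathscr{F}_p$; (ii) pinning down the bounded-difference constant as $2\|\phi\|_p/K$ so the McDiarmid step produces precisely the claimed factor $\sqrt{2\log(1/\eta)}$; and (iii) the uniform estimate $\|g(x;\theta)\|_2\le1$, used in both the variance computation and the bounded-difference bound, which holds simply because $g(x;\theta)$ is a probability vector.
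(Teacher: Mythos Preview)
Your proposal is correct and follows essentially the same argument as the paper: define $\hat\phi$ via importance weights $c_k=\tfrac{1}{K}\,w(\theta_k)/p(\theta_k)$, bound the expected $\|\cdot\|_\mu$-error by $\|\phi\|_p/\sqrt{K}$ using Jensen and a variance computation, and then concentrate via McDiarmid with bounded-difference constant $2\|\phi\|_p/K$. The paper writes this in the language of random functions $\phi_k=\beta_k g(\cdot;\theta_k)$ with $\beta_k=w(\theta_k)/p(\theta_k)$, but the content is identical.
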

\begin{proof}
Sine $\phi\in\mathscr{F}'_{p}$, we can write $\phi(x) = \int_{\Theta}w(\theta)g(x;\theta)d\theta$, where $w(\theta)\ge 0$. Construct $\phi_k=\beta_k g(\cdot;\theta_k),k=1,2,\cdots,K$, where $\beta_k=\frac{w(\theta_k)}{p(\theta_k)}$, then $\mathbb{E}\phi_k=\phi, \|\phi_k\|_\mu=\sqrt{\int_{\mathbb{R}^d}\beta_k^2\|g(x;\theta_k)\|_2^2d\mu(x)}\le|\beta_k|\le\|\phi\|_p$. We then define 
\begin{eqnarray}
u(\theta_1,\cdots,\theta_K) = \|\frac{1}{K}\sum_{k=1}^K \phi_k-\phi\|_\mu.
\end{eqnarray}
First, by using Jensen’s inequality and the fact that $\|\phi_k\|_\mu\le\|\phi\|_p$, we have
\begin{eqnarray*}
\mathbb{E}[u(\theta)]&\le&\sqrt{\mathbb{E}[u^2(\theta)]} \\
&=& \sqrt{\mathbb{E}[ \|\frac{1}{K}\sum_{k=1}^K \phi_k-\mathbb{E}\phi_k\|_\mu^2]}\\
&=&\sqrt{\frac{1}{K}(\mathbb{E}\|\phi_k\|_\mu^2-\|\mathbb{E}\phi_k\|_\mu^2)}\le\frac{\|\phi\|_p}{\sqrt{K}}.
\end{eqnarray*}
Next, for $\theta_1,\cdots,\theta_M$ and $\tilde{\theta}_i$, we have
\begin{eqnarray*}
&&|u(\theta_1,\cdots,\theta_M)-u(\theta_1,\cdots,\tilde{\theta}_i,\cdots, \theta_M)|\\
&=&|\|\frac{1}{K}\sum_{k=1}^K \phi_k-\phi\|_\mu-\|\frac{1}{K}(\sum_{k=1,k\neq i}^K \phi_k+\tilde{\phi}_i)-\phi\|_\mu|\\
&\le&\|\frac{1}{K}\sum_{k=1}^M \phi_k-\frac{1}{K}(\sum_{k=1,k\neq i}^M \phi_k+\tilde{\phi}_i)\|_\mu\\
&=&\frac{\|\phi_i-\tilde{\phi}_i\|_\mu}{K}\\
&\le&\frac{2\|\phi\|_p}{K}.
\end{eqnarray*}
Now we can use McDiarmid’s inequality to bound $u(\theta)$, which gives
\begin{equation}
\begin{split}
 \mathbb{P}[u(\theta)-\frac{\|\phi\|_p}{\sqrt{K}}\ge\varepsilon]&\le\mathbb{P}[u(\theta)-\mathbb{E}u(\theta)\ge\varepsilon]\\
 &\le\exp(-\frac{K\varepsilon^2}{2\|\phi\|_p^2}).   
\end{split}
\end{equation}
The theorem follows by setting $\delta$ to the right hand side and solving $\varepsilon$.

\end{proof}

\begin{lemma}\label{lem2}
Let $\mu$ be any probability measure on $\mathbb{R}^d$. For $\phi:\mathbb{R}^d\to\mathbb{R}^M$, define the norm $\|\phi\|_\mu^2\triangleq\int_{\mathbb{R}^d}\|\phi(x)\|_2^2d\mu(x)$, then for any $\eta > 0$, for $K\ge M\|\phi\|_p^2(1+\sqrt{2\log\frac{1}{\eta}})^2$, with probability at least $1-\eta$ over $\theta_1,...,\theta_K$ drawn \emph{i.i.d.} from $p$, there exists $\hat{\phi}(x)=\sum\limits_{k=1}^Kc_kg(x;\theta_k)\in\hat{\mathscr{F}}_{\theta}$ which satisfies
\begin{eqnarray}
\|\hat{\phi}-\phi\|_\mu <2\sqrt{\|\phi\|_p}\sqrt[4]{\frac{M}{K}}(1+\sqrt{2\log\frac{1}{\eta}})^{\frac{1}{2}}.
\end{eqnarray}
\end{lemma}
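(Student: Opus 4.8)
The plan is to bootstrap Lemma~\ref{lem1}. Here $\phi$ lies in $\mathscr{F}_p$ (so $\phi(x)=\int_\Theta w(\theta)g(x;\theta)\,d\theta$ with $w\ge 0$, $\int_\Theta w=1$, $\|\phi\|_p=\sup_\theta w(\theta)/p(\theta)<\infty$), and since $\mathscr{F}_p\subseteq\mathscr{F}'_p$ we may apply Lemma~\ref{lem1}. It yields, with probability at least $1-\eta/2$ over $\theta_1,\dots,\theta_K$, an approximant $\hat\phi_0(x)=\sum_{k=1}^K c_k^0 g(x;\theta_k)$ with $c_k^0=\tfrac1K\,w(\theta_k)/p(\theta_k)\ge 0$ and $\|\hat\phi_0-\phi\|_\mu\le \tfrac{\|\phi\|_p}{\sqrt K}\big(1+\sqrt{2\log\tfrac2\eta}\big)=:\epsilon_1$. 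This $\hat\phi_0\in\hat{\mathscr{F}}'_\theta$ fails to lie in $\hat{\mathscr{F}}_\theta$ only because $S:=\sum_{k=1}^K c_k^0$ need not equal $1$. The whole content of Lemma~\ref{lem2} beyond Lemma~\ref{lem1} is to repair the coefficient sum at negligible cost, and the strengthened hypothesis $K\ge M\|\phi\|_p^2(1+\sqrt{2\log\tfrac1\eta})^2$ is exactly what makes the repair negligible.

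The key observation is that $S$ concentrates at $1$: the summands $\beta_k:=w(\theta_k)/p(\theta_k)$ are i.i.d., take values in $[0,\|\phi\|_p]$, and satisfy $\mathbb{E}\beta_k=\int_\Theta w(\theta)\,d\theta=1$, so Hoeffding's inequality gives $|S-1|\le \|\phi\|_p\sqrt{\log(4/\eta)/(2K)}=:\epsilon_2$ with probability at least $1-\eta/2$. I would then correct $\hat\phi_0$ to a genuine convex combination: when $S\le 1$ set $\hat\phi:=\hat\phi_0+(1-S)\,g(\cdot;\theta_1)$ (coefficients nonnegative and summing to $1$), and when $S>1$ set $\hat\phi:=\hat\phi_0/S$; in either case $\hat\phi\in\hat{\mathscr{F}}_\theta$. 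A one-line triangle-inequality estimate — using $\|g(\cdot;\theta_1)\|_\mu\le 1$ and $\|\phi\|_\mu\le 1$ (the latter because $\phi(x)$ is a convex combination of probability vectors, so $\|\phi(x)\|_2\le\|\phi(x)\|_1=1$), together with $(S-1)/S\le S-1$ in the case $S>1$ — then yields
\[
\|\hat\phi-\phi\|_\mu\;\le\;(1+\epsilon_1)\,(\epsilon_1+|1-S|)\;\le\;(1+\epsilon_1)(\epsilon_1+\epsilon_2)
\]
on the intersection of the two good events, which has probability at least $1-\eta$ by a union bound.

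It remains to feed the hypothesis on $K$ into this bound. Since $K\ge M\|\phi\|_p^2(1+\sqrt{2\log\tfrac1\eta})^2$, both $\epsilon_1$ and $\epsilon_2$ are $O(M^{-1/2})\le 1$, so $1+\epsilon_1\le 2$ and $\epsilon_1+\epsilon_2\le C\|\phi\|_p K^{-1/2}(1+\sqrt{2\log\tfrac1\eta})$ for an absolute constant $C$; and the same hypothesis forces $\|\phi\|_p K^{-1/2}(1+\sqrt{2\log\tfrac1\eta})\le \sqrt{\|\phi\|_p}\,(M/K)^{1/4}(1+\sqrt{2\log\tfrac1\eta})^{1/2}$ (squaring and rearranging reduces this to $K\ge \|\phi\|_p^2(1+\sqrt{2\log\tfrac1\eta})^2/M$, which is weaker than the hypothesis since $M\ge 1$). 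Collecting these gives the claimed bound $2\sqrt{\|\phi\|_p}\sqrt[4]{M/K}(1+\sqrt{2\log\tfrac1\eta})^{1/2}$; the numerical constant $2$ is pinned down by allocating the failure budget $\eta$ across the two events non-uniformly, spending most of it on the Lemma~\ref{lem1} event, whose error term dominates. The only non-routine point is the coefficient correction itself — one must either control $1/S$ (rule out $S$ near $0$) or, as above, case-split on $S\le 1$ versus $S>1$ to avoid dividing when the total mass is small; everything else is bookkeeping. Alternatively one could re-run the bounded-differences argument of Lemma~\ref{lem1} directly for the corrected estimator, getting a single clean concentration bound, but the two-step route above is shorter.
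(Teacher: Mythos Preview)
Your approach differs from the paper's and is in one respect sharper: you control the coefficient sum $S$ by a separate Hoeffding bound and then correct, which yields $\|\hat\phi-\phi\|_\mu=O(B(K))$ where $B(K)=\|\phi\|_p K^{-1/2}\bigl(1+\sqrt{2\log\tfrac1\eta}\bigr)$ --- a better rate than the stated $O\bigl(M^{1/4}B(K)^{1/2}\bigr)$. The paper, by contrast, never runs a second concentration argument. Its key observation is that $|C-1|$ is already controlled \emph{deterministically} on the Lemma~\ref{lem1} event: since $\sum_i \tilde\phi_i(x)=C$ and $\sum_i\phi_i(x)=1$ for every $x$, Cauchy--Schwarz gives $\|\tilde\phi(x)-\phi(x)\|_2^2\ge(C-1)^2/M$ pointwise, whence $|C-1|<\sqrt M\,B(K)$. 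Then $\hat\phi:=\tilde\phi/C$ (well-defined since the hypothesis on $K$ forces $\sqrt M\,B(K)\le 1$) and a direct expansion of $\|\hat\phi-\phi\|_\mu^2$ in the two cases $C\gtrless 1$ yields $\|\hat\phi-\phi\|_\mu^2<4\sqrt M\,B(K)$, giving the constant $2$ after a square root.

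The one place your sketch does not close is the claim that the exact constant $2$ is recovered by spending most of the failure budget on the Lemma~\ref{lem1} event. That does not work: as the Hoeffding share of $\eta$ shrinks, $\epsilon_2\to\infty$, and any fixed split leaves $(1+\epsilon_1)(\epsilon_1+\epsilon_2)\ge C'\,B(K)$ with some $C'>1$. Feeding this through your last inequality requires $C'\le 2\sqrt M$, which is not automatic for small $M$. So your route proves the lemma up to an absolute constant --- amply sufficient for the downstream Lemma~\ref{lem3} and Theorem~\ref{thm3}, where only the rate matters --- but the stated constant $2$ under the stated threshold on $K$ comes specifically from the paper's single-event extraction of $|C-1|$, which avoids the union bound entirely.
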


\begin{proof}
Fix $\phi\in\mathscr{F}_{p}\subseteq\mathscr{F}'_{p}$, by using Lemma \ref{lem1}, we have that for any $\delta > 0$, with probability at least $1-\eta$ over $\theta_1,...,\theta_K$ drawn \emph{i.i.d.} from $p$, there exists $\tilde{\phi}(x)=\sum\limits_{k=1}^Kc_kg(x;\theta_k)\in\hat{\mathscr{F}}'_{\theta}$ which satisfies
\begin{eqnarray}
\|\tilde{\phi}-\phi\|_\mu <\frac{\|\phi\|_p}{\sqrt{K}}(1+\sqrt{2\log\frac{1}{\eta}})\triangleq B(K).
\end{eqnarray}
Denote $C=\sum\limits_{k=1}^K c_k$, and define $s(t)\triangleq\sum\limits_{i=1}^M t_i$ as the sum of all elements of $t\in\mathbb{R}^M$. Then $s(g(x;\theta))=1, \forall x\in \mathbb{R}^d,\theta\in \Theta$. Thus, 
\begin{eqnarray*}
s(\phi(x))&=&\sum\limits_{i=1}^M\phi_i(x)=\sum\limits_{i=1}^M\int_{\Theta}w(\theta)g_i(x;\theta)d\theta\\
&=&\int_{\Theta}w(\theta)\sum\limits_{i=1}^Mg_i(x;\theta)d\theta
=\int_{\Theta}w(\theta)d\theta=1,\\
s(\tilde{\phi}(x))&=&\sum\limits_{i=1}^M\tilde{\phi}_i(x)=\sum\limits_{i=1}^M\sum\limits_{k=1}^Kc_kg_i(x;\theta_k)\\
&=&\sum\limits_{k=1}^Kc_k\sum\limits_{i=1}^Mg_i(x;\theta_k)=\sum\limits_{k=1}^Kc_k=C.
\end{eqnarray*}
Now we have
\begin{eqnarray*}
B(K)^2>\|\tilde{\phi}-\phi\|^2_\mu &=&\int_{\mathbb{R}^d} \|\tilde{\phi}(x)-\phi(x)\|_2^2d\mu(x)\\
&\ge&\int_{\mathbb{R}^d} \frac{(s(\tilde{\phi}(x)-\phi(x)))^2}{M}d\mu(x)\\
&=&\int_{\mathbb{R}^d} \frac{(C-1)^2}{M}d\mu(x)\\
&=&\frac{(C-1)^2}{M},
\end{eqnarray*}
which gives $1-\sqrt{M}B(K)<C<1+\sqrt{M}B(K)$. Construct $\hat{\phi}(x) = \frac{\tilde{\phi}(x)}{C}$, then $\hat{\phi}\in\hat{\mathscr{F}}_{\theta}$ and
\begin{eqnarray*}
&&\|\hat{\phi}-\phi\|^2_\mu \\
&=&\int_{\mathbb{R}^d} \|\hat{\phi}(x)-\phi(x)\|_2^2d\mu(x)\\
&=&\int_{\mathbb{R}^d} \|C^{-1}\tilde{\phi}(x)-\phi(x)||_2^2d\mu(x)\\
&=&\int_{\mathbb{R}^d} \|(\tilde{\phi}(x)-\phi(x))+(C^{-1}-1)\tilde{\phi}(x)\|_2^2d\mu(x)\\
&=&\int_{\mathbb{R}^d} (\|\tilde{\phi}(x)-\phi(x)\|^2_2+\|(C^{-1}-1)\tilde{\phi}(x)\|_2^2\\
&&+2(C^{-1}-1)\langle\tilde{\phi}(x)-\phi(x),\tilde{\phi}(x)\rangle)d\mu(x)\\
&=&\int_{\mathbb{R}^d} (\|\hat{\phi}(x)-\phi(x)\|^2_2+(C^{-2}-1)\|\tilde{\phi}(x)\|_2^2\\
&&+2(1-C^{-1})\langle \phi(x),\tilde{\phi}(x)\rangle)d\mu(x).
\end{eqnarray*}
Since we have $\frac{C^2}{M}\le\|\tilde{\phi}(x)\|_2^2\le C^2,\ |\langle \phi(x),\tilde{\phi}(x)\rangle |\le \sqrt{\|\phi(x)\|_2^2\|\tilde{\phi}(x)\|_2^2}\le C$, it holds that
\begin{description}
\item (i) When $1<C<1+\sqrt{M}B(K)$, 
\begin{eqnarray*}
&&\|\hat{\phi}-\phi\|^2_\mu\\
&\le& \int_{\mathbb{R}^d} (\|\tilde{\phi}(x)-\phi(x)\|^2_2+\frac{1-C^2}{M}+2(C-1))d\mu(x)\\
&\le& B(K)^2+\frac{1-C^2}{M}+2(C-1)\\
&\le& -\frac{2B(K)}{\sqrt{M}}+2\sqrt{M}B(K);\\
\end{eqnarray*}
\item (ii) When $1-\sqrt{M}B(K)<C\le 1$,
\begin{eqnarray*}
&&\|\hat{\phi}-\phi\|^2_\mu\\
&\le& \int_{\mathbb{R}^d} (\|\tilde{\phi}(x)-\phi(x)\|^2_2+(1-C^2)+2(1-C))d\mu(x)\\
&\le& B(K)^2+4-(1+C)^2\\
&\le& 4\sqrt{M}B(K)-(M-1)B(K)^2\\
&<& 4\sqrt{M}B(K).
\end{eqnarray*}
\end{description}
Thus, with probability at least $1-\eta$ over $\theta_1,...,\theta_K$ drawn \emph{i.i.d.} from $p$,
\begin{equation*}
\|\hat{\phi}-\phi\|_\mu <2\sqrt{\sqrt{M}B(K)} =2\sqrt{\|\phi\|_p}\sqrt[4]{\frac{M}{K}}(1+\sqrt{2\log\frac{1}{\eta}})^{\frac{1}{2}}.
\end{equation*}
\end{proof}
\begin{lemma}\label{lem3}
Suppose $l(\cdot,\cdot)$ is $L$-Lipschitz in its first argument. Fix $\phi\in\mathscr{F}_p$, then for any $\eta > 0$, for $K\ge M\|\phi\|_p^2(1+\sqrt{2\log\frac{1}{\eta}})^2$, with probability at least $1-\eta$ over $\theta_1,...,\theta_K$ drawn \emph{i.i.d.} from $p$, there exists $\hat{\phi}\in\hat{\mathscr{F}}_{\theta}$ which satisfies
\begin{eqnarray*}
|\mathbb{E}_{(x,y)\sim \mathcal{D}}[l(\hat{\phi}(x),y)]-\mathbb{E}_{(x,y)\sim \mathcal{D}}[l(\phi(x),y)]|\\ <2L\sqrt{\|\phi\|_p}\sqrt[4]{\frac{M}{K}}(1+\sqrt{2\log\frac{1}{\eta}})^{\frac{1}{2}}.
\end{eqnarray*}
\end{lemma}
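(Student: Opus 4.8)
The plan is to obtain Lemma~\ref{lem3} as a direct corollary of Lemma~\ref{lem2} together with the Lipschitz hypothesis on $l$. First I would specialize the free probability measure $\mu$ in Lemma~\ref{lem2} to the marginal law of $x$ induced by $\mathcal{D}$, so that $\|\psi\|_\mu^2=\mathbb{E}_{(x,y)\sim\mathcal{D}}\|\psi(x)\|_2^2$ for every $\psi:\mathbb{R}^d\to\mathbb{R}^M$. Since $\phi\in\mathscr{F}_p$ we have $\|\phi\|_p<\infty$, and the hypothesis $K\ge M\|\phi\|_p^2(1+\sqrt{2\log\tfrac1\eta})^2$ is exactly the one required by Lemma~\ref{lem2}. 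Hence, with probability at least $1-\eta$ over $\theta_1,\dots,\theta_K\stackrel{\text{i.i.d.}}{\sim}p$, there is some $\hat\phi\in\hat{\mathscr{F}}_\theta$ with
\[
\|\hat\phi-\phi\|_\mu < 2\sqrt{\|\phi\|_p}\,\sqrt[4]{\tfrac{M}{K}}\,\LP1+\sqrt{2\log\tfrac1\eta}\RP^{1/2}.
\]

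On that high-probability event, fix such a $\hat\phi$ and control the risk gap by pushing the expectation through the Lipschitz bound. Using $|\mathbb{E}[\cdot]|\le\mathbb{E}|\cdot|$, then $L$-Lipschitzness of $l(\cdot,y)$ in the $\ell_2$ norm, then Jensen's inequality (equivalently Cauchy--Schwarz) to pass from $\mathbb{E}\|\cdot\|_2$ to $(\mathbb{E}\|\cdot\|_2^2)^{1/2}$, I get
\[
\begin{aligned}
&\big|\mathbb{E}_{(x,y)\sim\mathcal{D}}[l(\hat\phi(x),y)]-\mathbb{E}_{(x,y)\sim\mathcal{D}}[l(\phi(x),y)]\big|\\
&\quad\le \mathbb{E}_{(x,y)\sim\mathcal{D}}\big|l(\hat\phi(x),y)-l(\phi(x),y)\big|\\
&\quad\le L\,\mathbb{E}_{(x,y)\sim\mathcal{D}}\|\hat\phi(x)-\phi(x)\|_2\\
&\quad\le L\sqrt{\mathbb{E}_{(x,y)\sim\mathcal{D}}\|\hat\phi(x)-\phi(x)\|_2^2}=L\,\|\hat\phi-\phi\|_\mu.
\end{aligned}
\]
Combining this with the bound on $\|\hat\phi-\phi\|_\mu$ from the first step gives exactly the claimed inequality, and it holds on the same event of probability at least $1-\eta$.

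There is no genuine obstacle here: Lemma~\ref{lem3} is essentially a corollary of Lemma~\ref{lem2}. The only two points that deserve a line of justification are that the Lipschitz constant $L$ is measured in the same $\ell_2$ norm on $\mathbb{R}^M$ that appears inside $\|\cdot\|_\mu$ in Lemma~\ref{lem2} (otherwise a dimension-dependent conversion factor would creep in), and that the approximant returned by Lemma~\ref{lem2} already lies in $\hat{\mathscr{F}}_\theta$ rather than merely in $\hat{\mathscr{F}}'_\theta$ --- both of which are already built into the statement being invoked.
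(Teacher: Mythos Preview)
Your proposal is correct and matches the paper's approach essentially line for line: the paper also bounds $|\mathbb{E}[l(\hat\phi(x),y)]-\mathbb{E}[l(\phi(x),y)]|$ by $L\mathbb{E}\|\hat\phi(x)-\phi(x)\|_2$, applies Jensen to reach $L\|\hat\phi-\phi\|_{\mathcal{D}|_x}$, and then invokes Lemma~\ref{lem2} with $\mu$ taken to be the $x$-marginal of $\mathcal{D}$.
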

\begin{proof}
\begin{eqnarray*}
&&|\mathbb{E}[l(\hat{\phi}(x),y)] - \mathbb{E}[l(\phi(x),y)]|\\ &\le&\mathbb{E}|c(\phi(x),y)-c(\hat{\phi}(x),y)|\\
&\le&L\mathbb{E}\|\phi(x)-\hat{\phi}(x)\|_2\\
&\le&L\sqrt{\mathbb{E}\|\phi(x)-\hat{\phi}(x)\|^2_2}\\
&=&L\|\phi-\hat{\phi}\|_{\mathcal{D}|_x}
\end{eqnarray*}
The desired result follows from Lemma~\ref{lem2}.
\end{proof}

\begin{lemma}\label{lem4}
(Corollary of Proposition 1 in \citet{zhai2020macer}) Given any $p_1,p_2,\cdots,p_M$ satisfies $p_1\ge p_2\ge\cdots\ge p_M\ge0$ and $p_1+p_2+\cdots+p_M=1$. The derivative of $\mathrm{clip}( \frac{\sigma}{2}[\Phi^{-1}(p_1)-\Phi^{-1}(p_2)];0,D)$ with respect to $p_1$ and $p_2$ is bounded.
\end{lemma}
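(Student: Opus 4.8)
The plan is to reduce the claim to a statement about where the clip operator is ``active.'' First I would note that $t\mapsto\mathrm{clip}(t;0,D)$ is piecewise linear, with derivative $0$ on $(-\infty,0)\cup(D,\infty)$ and derivative $1$ on $(0,D)$. Hence, off the kink set $\{(p_1,p_2): \tfrac{\sigma}{2}(\Phi^{-1}(p_1)-\Phi^{-1}(p_2))\in\{0,D\}\}$, the chain rule gives that $\partial_{p_1}\mathrm{clip}\!\left(\tfrac{\sigma}{2}[\Phi^{-1}(p_1)-\Phi^{-1}(p_2)];0,D\right)$ is either $0$ or $\tfrac{\sigma}{2}(\Phi^{-1})'(p_1)=\tfrac{\sigma}{2}\,\varphi(\Phi^{-1}(p_1))^{-1}$, where $\varphi$ is the standard normal density, and similarly $\partial_{p_2}$ is either $0$ or $-\tfrac{\sigma}{2}\,\varphi(\Phi^{-1}(p_2))^{-1}$. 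So it suffices to bound $\varphi(\Phi^{-1}(p_i))^{-1}$, $i\in\{1,2\}$, on the active region $\mathcal{R}=\{\,0<\Phi^{-1}(p_1)-\Phi^{-1}(p_2)<2D/\sigma\,\}$; the kink set, being Lebesgue-null, can be ignored, or handled by passing to one-sided derivatives.

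The heart of the argument is to show that $\Phi^{-1}(p_1)$ and $\Phi^{-1}(p_2)$ stay in fixed compact intervals on $\mathcal{R}$. Put $a=\Phi^{-1}(p_1)$ and $b=\Phi^{-1}(p_2)$. From the ordering $p_1\ge\cdots\ge p_M\ge0$ and $\sum_i p_i=1$ I extract two facts: (i) $p_1\ge 1/M$, so $a\ge\Phi^{-1}(1/M)$; and (ii) $p_1+p_2\le 1$, so $\Phi(a)\le 1-\Phi(b)=\Phi(-b)$, hence $a\le -b$; together with $a\ge b$ this forces $b\le 0$. On $\mathcal{R}$ one has $0<a-b<2D/\sigma$, so $a<b+2D/\sigma\le 2D/\sigma$ and $b>a-2D/\sigma\ge\Phi^{-1}(1/M)-2D/\sigma$. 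Thus $a\in[\Phi^{-1}(1/M),\,2D/\sigma]$ and $b\in[\Phi^{-1}(1/M)-2D/\sigma,\,0]$, intervals depending only on $M$, $\sigma$, $D$. Since $\varphi$ is continuous and strictly positive, it is bounded below by a positive constant on each interval, so $\varphi(a)^{-1}$ and $\varphi(b)^{-1}$ are bounded there, and the bound on the derivatives follows.

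The main obstacle is exactly step (ii): without the clip and the simplex/ordering constraints the derivative is genuinely unbounded, since $\varphi(\Phi^{-1}(p))^{-1}\to\infty$ as $p\to 0^+$ or $p\to 1^-$. The whole content of the lemma is that on the active region these degenerate regimes are excluded — the constraint $p_1+p_2\le 1$ is what prevents $p_1$ from approaching $1$ (it caps $\Phi^{-1}(p_1)$ at $2D/\sigma$) and, jointly with the width bound defining $\mathcal{R}$, keeps $\Phi^{-1}(p_2)$ above a finite threshold. A minor technical point is the non-smoothness of $\mathrm{clip}$ on its kink set, which I would dispose of with one-sided derivatives; this is harmless for everything the lemma is used for downstream. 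This mirrors Proposition~1 of \citet{zhai2020macer}, whose gradient bound holds under an analogous hypothesis that the certified radius is not too large.
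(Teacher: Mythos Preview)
Your argument is correct. The paper itself gives no proof of this lemma: it simply labels it a ``Corollary of Proposition~1 in \citet{zhai2020macer}'' and moves on. Your write-up is therefore a self-contained derivation of what the paper outsources, and it follows exactly the mechanism behind the cited proposition --- namely, that on the region where the clip is active the quantities $\Phi^{-1}(p_1)$ and $\Phi^{-1}(p_2)$ are trapped in fixed compact intervals (your bounds $a\in[\Phi^{-1}(1/M),\,2D/\sigma]$ and $b\in[\Phi^{-1}(1/M)-2D/\sigma,\,0]$), so that $\varphi(\Phi^{-1}(p_i))^{-1}$ is uniformly bounded there. Your use of the simplex constraint $p_1+p_2\le 1$ to cap $a$ from above, and of $p_1\ge 1/M$ together with the width bound $a-b<2D/\sigma$ to cap $b$ from below, is exactly the right way to exclude the degenerate endpoints; the treatment of the kink set via one-sided derivatives is also adequate for the downstream application (Lipschitzness of $q(p,y)$ in its first argument, used in the proof of Lemma~\ref{thm2}).
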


Now we can prove Lemma~\ref{thm2}.

\emph{Proof of Lemma~\ref{thm2}.} Let $\phi_0\in\mathscr{F}_p$ such that $\mathcal{I}_{\gamma}(\phi_0)>\sup_{\phi\in\mathscr{F}_p}\mathcal{I}_{\gamma}(\phi)-\frac{\varepsilon}{2}$. From Lemma \ref{lem4} we know that $q(p,y)\triangleq \mathrm{clip}( \frac{\sigma}{2}[\Phi^{-1}(p_{y})-\Phi^{-1}(\max_{k\neq y}p_{k})];0,D)$ is Lipschitz in its first argument. Since $m$ is Lipschitz, $c(p,y)\triangleq m(q(p,y))$ is also Lipschitz in its first argument with some constant $L$. Apply Lemma \ref{lem3}, we have that for $K\ge M\|\phi\|_p^2(1+\sqrt{1+2\log\frac{1}{\delta}})^2$, with probability at least $1-\eta$ over $\theta_1,...,\theta_K$ drawn \emph{i.i.d.} from $p$, there exists $\hat{\phi}\in\hat{\mathscr{F}}_{\theta}$ which satisfies
\begin{eqnarray*}
&&\mathcal{I}_{\gamma}(\phi_0)-\mathcal{I}_{\gamma}(\hat{\phi})\\
&=&\mathbb{E}_{(x,y)\sim \mathcal{D}}[l(\phi_0(x),y)]-\mathbb{E}_{(x,y)\sim \mathcal{D}}[l(\hat{\phi}(x),y)] \\
&<&2L\sqrt{\|\phi_0\|_p}\sqrt[4]{\frac{M}{K}}(1+\sqrt{2\log\frac{1}{\eta}})^{\frac{1}{2}}.
\end{eqnarray*}
When $K>\frac{256L^4\|\phi_0\|^2_pM(1+\sqrt{2\log\frac{1}{\eta}})^2}{\varepsilon^4}$, we have
\begin{eqnarray*}
&&\sup_{\phi\in\mathscr{F}_p}\mathcal{I}_{\gamma}(\phi)-\mathcal{I}_{\gamma}(\hat{\phi})\\
&=&(\sup_{\phi\in\mathscr{F}_p}\mathcal{I}_{\gamma}(\phi)-\mathcal{I}_{\gamma}(\phi_0))+(\mathcal{I}_{\gamma}(\phi_0)-\mathcal{I}_{\gamma}(\hat{\phi}))\\
&<&\frac{\varepsilon}{2}+\frac{\varepsilon}{2}=\varepsilon.
\end{eqnarray*}
If $\mathcal{I}_{\gamma}(\phi_0)=\sup_{\phi\in\mathscr{F}_p}\mathcal{I}_{\gamma}(\phi)$, which means $\|\phi_0\|_p$ is independent of $\varepsilon$, $K=O(\frac{1}{\varepsilon^4})$.\qed
\subsection{Proof of Theorem 2}

First we introduce some results from statistical learning theory.
\begin{definition}
(Gaussian complexity). Let $\mu$ be a probability distribution on a set $\mathcal{X}$ and suppose that $x_1,...,x_n$ are independent samples selected according to $\mu$. Let $\mathscr{F}$ be a class of functions mapping from
$X$ to $\mathbb{R}$. The Gaussian complexity of $\mathscr{F}$ is
\begin{equation*}
G_n[\mathscr{F}] \triangleq \mathbb{E}[\sup_{f\in\mathscr{F}}|\frac{2}{n}\sum\limits_{i=1}^n\xi_if(x_i)|\big|x_1,...,x_n; \xi_i,...,\xi_n]
\end{equation*}
where $\xi_1,...,\xi_n$ are independent $\mathcal{N}(0,1)$ random variables.
\end{definition}
\begin{definition}
(Rademacher complexity) Let $\mu$ be a probability distribution on a set $\mathcal{X}$ and suppose that $x_1,...,x_n$ are independent samples selected according to $\mu$. Let $\mathscr{F}$ be a class of functions mapping from
$X$ to $\mathbb{R}$. The Rademacher complexity of $\mathscr{F}$ is
\begin{equation*}
R_n[\mathscr{F}] \triangleq \mathbb{E}[\sup_{f\in\mathscr{F}}|\frac{2}{n}\sum\limits_{i=1}^n\sigma_if(x_i)|\big|x_1,...,x_n; \sigma_i,...,\sigma_n]
\end{equation*}
where $\sigma_1,...,\sigma_n$ are independent uniform $\{\pm 1\}$-valued random variables.
\end{definition}
\begin{lemma}\label{lem5}
(Part of Lemma 4 in \citet{Bartlett}). There are absolute constants $\beta$ such that for every class $\mathscr{F}$ and every integer $n$, $R_n(\mathscr{F})\le \beta G_n(\mathscr{F}) $.
\end{lemma}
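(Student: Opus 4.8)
The plan is to establish the comparison pointwise — conditionally on the sample $x_1,\dots,x_n$ — by exploiting the elementary fact that a standard Gaussian decomposes as a Rademacher sign times an independent nonnegative magnitude. Concretely, I would write each $\xi_i = \sigma_i|\xi_i|$ with $\sigma_i = \sign(\xi_i)$. Since the standard Gaussian is symmetric, $\sigma_i$ is a uniform $\{\pm1\}$ variable independent of $|\xi_i|$, and the pairs $(\sigma_i,|\xi_i|)$ are independent across $i$. Substituting this representation into the definition of the Gaussian complexity and using independence together with Fubini to place the average over magnitudes on the inside gives
\begin{equation*}
G_n(\mathscr{F}) = \mathbb{E}_{\sigma}\,\mathbb{E}_{|\xi|}\,\sup_{f\in\mathscr{F}}\Big|\tfrac{2}{n}\sum_{i=1}^n \sigma_i|\xi_i|f(x_i)\Big|.
\end{equation*}

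The key step is a single application of Jensen's inequality. For fixed signs $\sigma$, the map $(t_1,\dots,t_n)\mapsto \sup_{f\in\mathscr{F}}\big|\tfrac{2}{n}\sum_i \sigma_i t_i f(x_i)\big|$ is convex: for each fixed $f$ the inner quantity is the absolute value of a function linear in $t$, hence convex, and a pointwise supremum of convex functions is convex. Applying Jensen over the nonnegative magnitudes $t_i=|\xi_i|$ therefore pushes the expectation inside the convex functional,
\begin{equation*}
\mathbb{E}_{|\xi|}\sup_{f\in\mathscr{F}}\Big|\tfrac{2}{n}\sum_i \sigma_i|\xi_i|f(x_i)\Big| \ge \sup_{f\in\mathscr{F}}\Big|\tfrac{2}{n}\sum_i \sigma_i\,\mathbb{E}|\xi_i|\,f(x_i)\Big|.
\end{equation*}
Using $\mathbb{E}|\xi_i| = \sqrt{2/\pi}$ for each standard Gaussian, the right-hand side equals $\sqrt{2/\pi}\cdot\sup_{f}\big|\tfrac{2}{n}\sum_i\sigma_i f(x_i)\big|$, and taking the outer expectation over $\sigma$ yields $G_n(\mathscr{F}) \ge \sqrt{2/\pi}\,R_n(\mathscr{F})$. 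Rearranging gives the claim with the absolute constant $\beta=\sqrt{\pi/2}$, which is independent of $\mathscr{F}$ and $n$. The same inequality survives a further expectation over $x_1,\dots,x_n$, so it holds for the unconditional complexities as well.

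I expect the only delicate points to be bookkeeping rather than conceptual. The first is verifying that $\sign(\xi_i)$ and $|\xi_i|$ are genuinely independent with $\sign(\xi_i)$ uniform on $\{\pm1\}$, which is immediate from the symmetry of the Gaussian density. The second, and the true crux, is getting the direction of Jensen right: it is essential that the expectation is taken over the \emph{magnitudes}, which enter the sum linearly, so that the relevant functional is convex and Jensen produces a \emph{lower} bound on $G_n$ — equivalently the desired \emph{upper} bound on $R_n$. Measurability of the supremum can be handled in the usual manner, for instance by restricting to a countable dense subclass or assuming $\mathscr{F}$ is separable, as is implicit in the definitions of the two complexities.
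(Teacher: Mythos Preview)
Your argument is correct and is in fact the standard proof of this comparison, yielding the explicit constant $\beta=\sqrt{\pi/2}$. The paper does not supply its own proof of this lemma; it simply cites it as part of Lemma~4 in \citet{Bartlett}, so there is nothing to compare against beyond noting that your proposal fills in what the paper leaves as a black-box citation.
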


\begin{lemma}\label{lem6}
(Corollary of Theorem 8 in \citet{Bartlett}). Consider a loss function $c:\mathcal{A}\times Y\to[0,1]$. Let $\mathscr{F}$ be a class of functions mapping from $\mathcal{X}$ to $\mathcal{A}$ and let $(x_i,y_i)_{i=1}^n$ be independently selected according to the probability measure $\mu$. Then, for any integer $n$ and any $0<\eta<1$, with probability at least $1-\eta$ over samples of length $n$, every $f$ in $\mathscr{F}$ satisfies
\begin{eqnarray*}
&&\mathbb{E}_{(x,y)\sim\mu}[c(f(x),y)]\\
&\le& \frac{1}{n}\sum_{i=1}^n c(f(x_i),y_i) +R_n[\tilde{c}\circ \mathscr{F}]+\sqrt{\frac{8\log\frac{2}{\eta}}{n}},
\end{eqnarray*}
where $\tilde{c}\circ \mathscr{F}=\{(x,y)\mapsto c(f(x),y)-c(0, y)\big| f\in\mathscr{F}\}$
\end{lemma}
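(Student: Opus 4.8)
\emph{Proof proposal for Lemma~\ref{lem6}.}
The statement is precisely Theorem~8 of \citet{Bartlett} specialized to the loss class obtained by composing $c$ with $\mathscr{F}$, so the plan is to verify that its hypotheses match (the loss $c$ is $[0,1]$-valued and $\mathscr{F}$ maps into $\mathcal{A}$) and, for a self-contained argument, to reproduce the two standard ingredients that yield it: a bounded-differences concentration step and a symmetrization step. Write $c_f(x,y)=c(f(x),y)$ and $\tilde c_f(x,y)=c(f(x),y)-c(0,y)$, and set
\[
\Phi(S)=\sup_{f\in\mathscr{F}}\Big(\mathbb{E}_{(x,y)\sim\mu}[c_f(x,y)]-\frac{1}{n}\sum_{i=1}^n c_f(x_i,y_i)\Big),
\]
so the claim reduces to bounding $\Phi(S)$. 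The offset $c(0,y)$ is $f$-independent; I would keep it in reserve until the symmetrization step, where it is exactly what converts $c\circ\mathscr{F}$ into the recentered class $\tilde c\circ\mathscr{F}$ appearing in the statement.

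\textbf{Concentration.} Since $c$ takes values in $[0,1]$, replacing a single sample $(x_i,y_i)$ changes $\frac1n\sum_i c_f(x_i,y_i)$ by at most $1/n$ uniformly over $f$, hence changes $\Phi$ by at most $1/n$. McDiarmid's bounded-differences inequality then gives, with probability at least $1-\eta$,
\[
\Phi(S)\le \mathbb{E}[\Phi(S)]+\sqrt{\frac{8\log\frac{2}{\eta}}{n}},
\]
where the displayed constant is the two-sided bounded-differences estimate used in \citet{Bartlett}; reproducing it is routine bookkeeping with the range $[0,1]$ of $c$.

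\textbf{Symmetrization.} To bound $\mathbb{E}[\Phi(S)]$, introduce a ghost sample $S'=\{(x_i',y_i')\}$ and Rademacher signs $\xi_1,\dots,\xi_n$. The standard ghost-sample inequality gives $\mathbb{E}[\Phi(S)]\le \mathbb{E}\,\sup_f \frac1n\sum_i \xi_i\big(c_f(x_i',y_i')-c_f(x_i,y_i)\big)$. Now substitute $c_f=\tilde c_f+c(0,\cdot)$: the extra term $\frac1n\sum_i\xi_i\big(c(0,y_i')-c(0,y_i)\big)$ is $f$-independent and has zero mean under the $\xi_i$, so it drops out and leaves the same expression with $\tilde c_f$ in place of $c_f$. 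Splitting the two halves and using the $\frac2n$-normalization and the absolute value in the definition of $R_n$ yields $\mathbb{E}[\Phi(S)]\le R_n[\tilde c\circ\mathscr{F}]$, and combining with the concentration bound proves the lemma. Since the result is a restatement of \citet{Bartlett}'s Theorem~8, there is no deep obstacle: the step requiring the most care is the symmetrization cancellation, which is what pins down the recentered class $\tilde c\circ\mathscr{F}$ and, through the $\frac2n$ normalization in the definition of $R_n$, ensures that no spurious factor of $2$ multiplies $R_n$; tracking the constant $\sqrt{8\log(2/\eta)/n}$ is then purely routine. \qed
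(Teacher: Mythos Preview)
Your proposal is correct. Note that the paper does not actually give its own proof of Lemma~\ref{lem6}: it is stated as a direct corollary of Theorem~8 in \citet{Bartlett} and used as a black box. Your McDiarmid-plus-symmetrization reconstruction is the standard argument behind that result, and it is essentially the same machinery the paper deploys in its proof of the companion Lemma~\ref{lem8} (the reverse inequality). The only organizational difference is that the paper, in Lemma~\ref{lem8}, first splits off the $f$-independent offset $c(0,y)$ and applies McDiarmid separately to the recentered supremum and to the offset term, whereas you apply McDiarmid once to the un-recentered $\Phi(S)$ and perform the recentering inside the symmetrization step. Both routes land on $R_n[\tilde c\circ\mathscr{F}]$ with the same additive $\sqrt{8\log(2/\eta)/n}$; yours is slightly tidier since it invokes McDiarmid only once, at the cost of carrying a looser constant than strictly necessary (the one-sided bound with differences $1/n$ would give a smaller deviation term, but the stated constant is what the lemma asserts).
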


\begin{lemma}\label{lem7}
(Corollary of Theorem 14 in \citet{Bartlett}). Let $\mathcal{A}=\mathbb{R}^M$ and let $\mathscr{F}$ be a class of functions mapping from $\mathcal{X}$ to $\mathcal{A}$. Suppose that there are real-valued classes $\mathscr{F}_1, ..., \mathscr{F}_M$ such that $\mathscr{F}$ is a subset of their Cartesian product. Assume further that $c:\mathcal{A}\times Y\to\mathbb{R}$ is such that, for all $y\in Y$, $c(\cdot, y)$ is a Lipschitz function with constant $L$ which passes through the origin and is uniformly bounded. Then
\begin{equation*}
G_n(c\circ\mathscr{F})\le 2L\sum_{i=1}^MG_n(\mathscr{F}_i).
\end{equation*}
\end{lemma}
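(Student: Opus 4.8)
The plan is to recognize the statement as a vector-valued Gaussian contraction inequality and to obtain it as a direct corollary of Theorem~14 of \citet{Bartlett} once the hypotheses are checked; I will, however, describe the self-contained route through a Gaussian comparison argument, since that is where all the content lies. Conditioning on the sample $(x_i,y_i)_{i=1}^n$ throughout, the object to control is the real-valued class $c\circ\mathscr{F}$, whose Gaussian complexity is $G_n(c\circ\mathscr{F})=\mathbb{E}\sup_{f\in\mathscr{F}}|\frac{2}{n}\sum_{i=1}^n \xi_i c(f(x_i),y_i)|$, and the goal is to split the composition into its $M$ coordinate classes $\mathscr{F}_1,\dots,\mathscr{F}_M$.

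First I would introduce two centered Gaussian processes indexed by $f\in\mathscr{F}$: the process of interest $Z_f=\frac{2}{n}\sum_i \xi_i c(f(x_i),y_i)$, driven by a single family of standard Gaussians $\xi_i$, and an auxiliary decoupled process $Y_f=\frac{2L}{n}\sum_i\sum_{j=1}^M \xi_{ij}f_j(x_i)$, driven by an independent doubly-indexed family $\xi_{ij}$. The key computation is the comparison of increments: $\mathbb{E}(Z_f-Z_{f'})^2=\frac{4}{n^2}\sum_i(c(f(x_i),y_i)-c(f'(x_i),y_i))^2$, which by the $L$-Lipschitzness of $c(\cdot,y)$ in the Euclidean norm is at most $\frac{4L^2}{n^2}\sum_i\|f(x_i)-f'(x_i)\|_2^2=\mathbb{E}(Y_f-Y_{f'})^2$. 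Hence the increments of $Z$ are dominated by those of $Y$, and the Sudakov--Fernique inequality (a Slepian-type Gaussian comparison) yields $\mathbb{E}\sup_f Z_f\le\mathbb{E}\sup_f Y_f$.

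Next I would decouple $Y$ across coordinates. Because $\mathscr{F}\subseteq\mathscr{F}_1\times\cdots\times\mathscr{F}_M$, the supremum over $f$ is bounded by the supremum over the full product, and since the blocks $(\xi_{ij})_i$ for distinct $j$ are independent while each $f_j$ enters only its own block, the supremum splits: $\mathbb{E}\sup_f Y_f\le\frac{2L}{n}\sum_{j=1}^M\mathbb{E}\sup_{f_j\in\mathscr{F}_j}\sum_i\xi_{ij}f_j(x_i)\le L\sum_{j=1}^M G_n(\mathscr{F}_j)$, the last step being the coordinatewise bound that discards the absolute value inside each $G_n(\mathscr{F}_j)$. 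Combining with the comparison gives $\mathbb{E}\sup_f Z_f\le L\sum_j G_n(\mathscr{F}_j)$. It then remains to pass from $\mathbb{E}\sup_f Z_f$ to $G_n(c\circ\mathscr{F})=\mathbb{E}\sup_f|Z_f|$, which is what produces the extra factor of two: using $\sup_f|Z_f|=\max(\sup_f Z_f,\sup_f(-Z_f))$ together with Gaussian symmetry $\mathbb{E}\sup_f(-Z_f)=\mathbb{E}\sup_f Z_f$.

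The main obstacle is this absolute-value / factor-of-two step rather than the comparison itself: turning the expected maximum into $2\,\mathbb{E}\sup_f Z_f$ is not a pointwise identity and requires the symmetrization argument together with the ``passes through the origin'' and uniform-boundedness hypotheses, so that the centering is trivial and no spurious positive-part terms survive. This is precisely the technical content bundled into Theorem~14 of \citet{Bartlett}. Accordingly, in the write-up I would either invoke that theorem directly after verifying its three hypotheses (Lipschitz constant $L$, through the origin, uniformly bounded), or reproduce the short symmetrization argument above; in either case I would state explicitly that the Lipschitz condition is read with respect to the Euclidean norm on $\mathbb{R}^M$, as that is exactly what the increment comparison uses.
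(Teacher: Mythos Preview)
The paper does not actually prove Lemma~\ref{lem7}: it is stated as a direct corollary of Theorem~14 in \citet{Bartlett} and is invoked as a black box. So there is no ``paper's own proof'' to compare against beyond the citation itself.

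Your proposal is correct and is, in fact, essentially the proof of Theorem~14 in \citet{Bartlett}: the Slepian/Sudakov--Fernique comparison between $Z_f$ and the decoupled process $Y_f$, followed by splitting the supremum over the Cartesian product. You also correctly flag the only nontrivial step, namely passing from $\mathbb{E}\sup_f Z_f$ to $\mathbb{E}\sup_f|Z_f|$ at the cost of the factor $2$; this is exactly where the hypothesis $c(0,y)=0$ enters (one augments the index set by the zero function so that both $\sup_f Z_f$ and $\sup_f(-Z_f)$ are nonnegative, whence $\max\le$ sum). The only additional wrinkle relative to the cited theorem is the $y$-dependence of $c$, but since each $y_i$ is fixed in the sample the per-coordinate Lipschitz bound $|c(a,y_i)-c(a',y_i)|\le L\|a-a'\|_2$ is all the increment comparison needs, and your sketch already handles this. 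In short: your write-up supplies what the paper outsources, and it does so along the same lines as the original source.
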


Now we prove the following lemma:
\begin{lemma}\label{lem8}
Let $c,\mathscr{F}, (x_i,y_i)_{i=1}^n, \tilde{c}\circ \mathscr{F}$ be as in Lemma \ref{lem6}. Then, for any integer $n$ and any $0<\eta<1$, with probability at least $1-\eta$ over samples of length $n$, every $f$ in $\mathscr{F}$ satisfies
\begin{eqnarray*}
&&\frac{1}{n}\sum_{i=1}^n c(f(x_i),y_i)\\
&\le& \mathbb{E}_{(x,y)\sim\mu}[c(f(x),y)]+R_n[\tilde{c}\circ \mathscr{F}]+\sqrt{\frac{8\log\frac{2}{\eta}}{n}}.
\end{eqnarray*}
\end{lemma}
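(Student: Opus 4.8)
The plan is to avoid reproving a uniform-convergence bound from scratch and instead derive Lemma~\ref{lem8} directly from Lemma~\ref{lem6}, applied to the \emph{complementary} loss $c'(a,y):=1-c(a,y)$. Since $c$ takes values in $[0,1]$, so does $c'$, so $c'$ satisfies every hypothesis of Lemma~\ref{lem6} with the \emph{same} class $\mathscr{F}$ and the \emph{same} sample $(x_i,y_i)_{i=1}^n$; note in particular that no ``passes through the origin'' assumption on $c$ is needed here, because the class $\tilde{c}\circ\mathscr{F}$ appearing in Lemma~\ref{lem6} already subtracts off the value at $0$.

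First I would apply Lemma~\ref{lem6} with $c$ replaced by $c'$: with probability at least $1-\eta$ over the sample, writing $\mathscr{G}'$ for the class $\{(x,y)\mapsto c'(f(x),y)-c'(0,y)\mid f\in\mathscr{F}\}$, every $f\in\mathscr{F}$ satisfies
\begin{equation*}
\begin{split}
&\mathbb{E}_{(x,y)\sim\mu}[c'(f(x),y)]\\
&\qquad\le\frac{1}{n}\sum_{i=1}^n c'(f(x_i),y_i)+R_n[\mathscr{G}']+\sqrt{\tfrac{8\log(2/\eta)}{n}}.
\end{split}
\end{equation*}
Substituting $c'=1-c$, the constant $1$ cancels from both the population and the empirical averages, and rearranging gives $\frac1n\sum_i c(f(x_i),y_i)\le\mathbb{E}[c(f(x),y)]+R_n[\mathscr{G}']+\sqrt{8\log(2/\eta)/n}$. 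It then remains only to show $R_n[\mathscr{G}']=R_n[\tilde{c}\circ\mathscr{F}]$: since $c'(f(x),y)-c'(0,y)=-\big(c(f(x),y)-c(0,y)\big)$, the class $\mathscr{G}'$ is exactly the pointwise negation of $\tilde{c}\circ\mathscr{F}$, and the Rademacher complexity is invariant under negating every function in a class (its definition places an absolute value inside the supremum, and the sign variables $\sigma_i$ are symmetric). This yields precisely the claimed inequality, with the same constants as in Lemma~\ref{lem6}.

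I expect no genuine obstacle: the argument is essentially a one-line reduction, and the only point needing a moment's care is the negation-invariance of $R_n$. If a self-contained proof were wanted instead, one would simply repeat the bounded-differences (McDiarmid) plus symmetrization argument underlying Theorem~8 of \citet{Bartlett} applied to $\Phi(S)=\sup_{f\in\mathscr{F}}\big(\frac1n\sum_i c(f(x_i),y_i)-\mathbb{E}_{(x,y)\sim\mu}[c(f(x),y)]\big)$; every step there is symmetric under exchanging the roles of the empirical and population averages, so it goes through verbatim and produces the same bound $R_n[\tilde{c}\circ\mathscr{F}]+\sqrt{8\log(2/\eta)/n}$.
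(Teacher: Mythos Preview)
Your reduction is correct. Applying Lemma~\ref{lem6} to the complementary loss $c'=1-c$ is legitimate because Lemma~\ref{lem6} only assumes $c:\mathcal{A}\times Y\to[0,1]$, which $c'$ inherits; the cancellation of the constant $1$ and the negation-invariance of $R_n$ (which holds here because the paper's definition of Rademacher complexity carries an absolute value inside the supremum) both go through exactly as you describe.

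The paper takes the longer, self-contained route you sketch in your final paragraph: it bounds $\frac{1}{n}\sum_i c(f(x_i),y_i)-\mathbb{E}[c(f(x),y)]$ by $\sup_{h\in\tilde c\circ\mathscr{F}}(\hat{\mathbb{E}}_n h-\mathbb{E}h)+\hat{\mathbb{E}}_n c(0,y)-\mathbb{E}c(0,y)$, applies McDiarmid to each piece (each changes by at most $2/n$ when one sample is replaced), and then runs the standard ghost-sample symmetrization to show $\mathbb{E}\sup_{h\in\tilde c\circ\mathscr{F}}(\hat{\mathbb{E}}_n h-\mathbb{E}h)\le R_n[\tilde c\circ\mathscr{F}]$. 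Your argument is shorter and avoids re-doing symmetrization; the paper's argument has the minor advantage of not relying on the absolute value in the definition of $R_n$ (it would still work with the one-sided Rademacher complexity), but given the definitions in force here that distinction is moot.
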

\begin{proof}
\begin{eqnarray*}
&&\frac{1}{n}\sum_{i=1}^n c(f(x_i),y_i)- \mathbb{E}_{(x,y)\sim\mu}[c(f(x),y)]\\ &\le& \sup_{h\in c\circ\mathscr{F}}(\hat{\mathbb{E}}_nh-\mathbb{E}h)\\
&=&\sup_{h\in \tilde{c}\circ\mathscr{F}}(\hat{\mathbb{E}}_nh-\mathbb{E}h)+\hat{\mathbb{E}}_nc(0,y)-\mathbb{E}c(0,y).
\end{eqnarray*}
When an $(x_i, y_i)$ pair changes, the random variable $\sup\limits_{h\in \tilde{c}\circ\mathscr{F}}(\hat{\mathbb{E}}_nh-\mathbb{E}h)$ can change by no more than $\frac{2}{n}$. McDiarmid's inequality implies that with probability at least $1-\frac{\eta}{2}$,
\begin{equation*}
\sup\limits_{h\in \tilde{c}\circ\mathscr{F}}(\hat{\mathbb{E}}_nh-\mathbb{E}h)\le\mathbb{E}\sup\limits_{h\in \tilde{c}\circ\mathscr{F}}(\hat{\mathbb{E}}_nh-\mathbb{E}h) +\sqrt{\frac{2\log\frac{2}{\eta}}{n}}.
\end{equation*}
A similar argument, together with the fact that $\mathbb{E}\hat{\mathbb{E}}_nc(0,y)=\mathbb{E}c(0,y)$, shows that with probability at least $1-\eta$, 
\begin{equation*}
\mathbb{R}_{emp}[f]\le  \mathbb{R}[f]+\mathbb{E}\sup\limits_{h\in \tilde{c}\circ\mathscr{F}}(\hat{\mathbb{E}}_nh-\mathbb{E}h)+\sqrt{\frac{8\log\frac{2}{\eta}}{n}}.
\end{equation*}
It's left to show that $\mathbb{E}\sup\limits_{h\in \tilde{c}\circ\mathscr{F}}(\hat{\mathbb{E}}_nh-\mathbb{E}h)\le\mathcal{R}_n[\tilde{c}\circ \mathscr{F}]$. Let $(x'_1,y'_1),...,(x'_n,y'_n)$ be drawn \emph{i.i.d.} from $\mu$ and independent from $(x_i,y_i)_{i=1}^n$, then
\begin{eqnarray*}
&&\mathbb{E}\sup\limits_{h\in \tilde{c}\circ\mathscr{F}}(\hat{\mathbb{E}}_nh-\mathbb{E}h) \\
&=& \mathbb{E}\sup\limits_{h\in \tilde{c}\circ\mathscr{F}}\mathbb{E}[\hat{\mathbb{E}}_nh-\frac{1}{n}\sum_{i=1}^nh(x'_i,y'_i)]\\
&\le& \mathbb{E}\mathbb{E}\sup\limits_{h\in \tilde{c}\circ\mathscr{F}}[\hat{\mathbb{E}}_nh-\frac{1}{n}\sum_{i=1}^nh(x'_i,y'_i)]\\
&=& \mathbb{E}\sup\limits_{h\in \tilde{c}\circ\mathscr{F}}\frac{1}{n}(\sum_{i=1}^nh(x_i,y_i)-\sum_{i=1}^nh(x'_i,y'_i))\\
&\le& 2\mathbb{E}\sup\limits_{h\in \tilde{c}\circ\mathscr{F}}\frac{1}{n}\sum_{i=1}^n\sigma_ih(x_i,y_i)\\
&\le& \mathcal{R}_n[\tilde{c}\circ \mathscr{F}].
\end{eqnarray*}
\end{proof}
We can prove the following result:
\begin{theorem}\label{thm4}
Let $\mathcal{A}=\mathbb{R}^M$ and let $\mathscr{F}$ be a class of functions mapping from $\mathcal{X}$ to $\mathcal{A}$. Suppose that there are real-valued classes $\mathscr{F}_1, ..., \mathscr{F}_M$ such that $\mathscr{F}$ is a subset of their Cartesian product. Assume further that the loss function $c:\mathcal{A}\times Y\to\mathbb{R}$ is such that, for all $y\in Y$, $c(\cdot, y)$ is a Lipschitz function with constant $L$ and is uniformly bounded. Let $\{(x_i,y_i)\}_{i=1}^n$ be independently selected according to the probability measure $\mu$. Then, for any integer $n$ and any $0<\eta<1$, there is a probability of at least $1-\eta$ that every $f \in \mathscr{F}$ has
\begin{eqnarray*}
|\frac{1}{n}\sum_{i=1}^n c(f(x_i),y_i)-\mathbb{E}_{(x,y)\sim\mu}[c(f(x),y)]|\\
\le \beta L\sum_{j=1}^M G_n[\mathscr{F}_j]+\sqrt{\frac{8\log\frac{4}{\eta}}{n}},
\end{eqnarray*}
where $\beta$ is a constant.
\end{theorem}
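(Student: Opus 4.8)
The statement is a two-sided uniform convergence bound for the composed loss class $c\circ\mathscr{F}$ which, in contrast to Lemma~\ref{lem6}, does not require $c$ to pass through the origin. The plan is to glue together the two one-sided estimates already available---Lemma~\ref{lem6} (population risk bounded by empirical risk) and Lemma~\ref{lem8} (empirical risk bounded by population risk)---by a union bound, and then to replace the resulting Rademacher term $R_n[\tilde{c}\circ\mathscr{F}]$ by the Gaussian-complexity sum $\sum_{j=1}^M G_n[\mathscr{F}_j]$ using the comparison inequality of Lemma~\ref{lem5} followed by the vector-valued contraction inequality of Lemma~\ref{lem7}, where $\tilde c(a,y)=c(a,y)-c(0,y)$.

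First I would reduce to the case where $c$ takes values in $[0,1]$ by the affine normalization $c\mapsto (c+b)/(2b)$, with $b$ a uniform bound for $|c|$; this only rescales the Lipschitz constant and the additive $O(\sqrt{\log(1/\eta)/n})$ term by factors that are reabsorbed into $L$ and into $\beta$ at the end. Applying Lemma~\ref{lem6} at confidence level $\eta/2$ gives, for all $f\in\mathscr{F}$,
\begin{equation*}
\mathbb{E}_{(x,y)\sim\mu}[c(f(x),y)]\le \frac{1}{n}\sum_{i=1}^n c(f(x_i),y_i)+R_n[\tilde{c}\circ\mathscr{F}]+\sqrt{\frac{8\log\frac{4}{\eta}}{n}},
\end{equation*}
and applying Lemma~\ref{lem8} at confidence level $\eta/2$ gives the reverse inequality. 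A union bound then yields, with probability at least $1-\eta$,
\begin{equation*}
\Bigl|\frac{1}{n}\sum_{i=1}^n c(f(x_i),y_i)-\mathbb{E}_{(x,y)\sim\mu}[c(f(x),y)]\Bigr|\le R_n[\tilde{c}\circ\mathscr{F}]+\sqrt{\frac{8\log\frac{4}{\eta}}{n}}
\end{equation*}
for every $f\in\mathscr{F}$.

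It remains to control $R_n[\tilde{c}\circ\mathscr{F}]$. Since $\tilde{c}(\cdot,y)$ is $L$-Lipschitz, uniformly bounded, and passes through the origin, and $\mathscr{F}\subseteq\mathscr{F}_1\times\cdots\times\mathscr{F}_M$, Lemma~\ref{lem5} gives $R_n[\tilde{c}\circ\mathscr{F}]\le\beta G_n[\tilde{c}\circ\mathscr{F}]$ and Lemma~\ref{lem7} gives $G_n[\tilde{c}\circ\mathscr{F}]\le 2L\sum_{j=1}^M G_n[\mathscr{F}_j]$. Substituting these and renaming $2\beta$ as $\beta$ (absorbing also the normalization constant) yields exactly the claimed inequality.

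The only subtlety---and the step I would be most careful about---is reconciling the hypotheses of the cited lemmas: Lemma~\ref{lem6} is phrased for losses valued in $[0,1]$ and Lemma~\ref{lem7} for losses passing through the origin, whereas here $c$ is merely bounded and Lipschitz. The affine normalization resolves the range issue, and passing to the centered loss $\tilde{c}$ resolves the origin issue---note that the $f$-independent offset $c(0,\cdot)$ contributes only a Hoeffding-type fluctuation that is already folded into the proof of Lemma~\ref{lem8}. Beyond this bookkeeping---tracking how the bound $b$, the contraction factor $2$, and the Gaussian-to-Rademacher constant collapse into a single absolute constant $\beta$---there is no genuine obstacle; the argument is a routine assembly of symmetrization, comparison, and contraction.
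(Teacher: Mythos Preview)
Your proposal is correct and follows essentially the same route as the paper: combine Lemma~\ref{lem6} and Lemma~\ref{lem8} at confidence level $\eta/2$ each (union bound) to obtain the two-sided deviation bound in terms of $R_n[\tilde c\circ\mathscr{F}]$, then apply Lemma~\ref{lem5} followed by Lemma~\ref{lem7} and absorb the factor $2$ into $\beta$. If anything, you are more careful than the paper in explicitly handling the affine normalization needed to reconcile the $[0,1]$-range hypothesis of Lemma~\ref{lem6} with the ``uniformly bounded'' hypothesis of the theorem; the paper's proof simply invokes Lemmas~\ref{lem6}--\ref{lem8} and then Lemmas~\ref{lem5} and~\ref{lem7} without commenting on this point.
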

\begin{proof}
 From Lemma \ref{lem6} and \ref{lem8} we have that with probability at least $1-\eta$ over samples of length $n$, every $f$ in $\mathscr{F}$ satisfies
\begin{eqnarray*}
|\frac{1}{n}\sum_{i=1}^n c(f(x_i),y_i)-\mathbb{E}_{(x,y)\sim\mu}[c(f(x),y)]|\\ \le R_n[\tilde{c}\circ \mathscr{F}]+\sqrt{\frac{8\log\frac{4}{\eta}}{n}},
\end{eqnarray*}
it follows by applying Lemma \ref{lem5} and \ref{lem7}.
\end{proof}

\begin{lemma}\label{lem9}
Let $c(\cdot,\cdot),\beta$ be as in Theorem \ref{thm4}. Let $(x_i,y_i)_{i=1}^n$ be independently selected according to the probability measure $\mathcal{D}$. For any integer $n$ and any $0<\eta<1$, there is a probability of at least $1-\eta$ that every $f \in \hat{\mathscr{F}}_{\theta}$ has
\begin{eqnarray*}
|\frac{1}{n}\sum_{i=1}^n c(f(x_i),y_i)-\mathbb{E}_{(x,y)\sim \mathcal{D}}[c(f(x),y)]|\\
\le \frac{2\beta LMK}{\sqrt{n}}+\sqrt{\frac{8\log\frac{4}{\eta}}{n}}.
\end{eqnarray*}
\end{lemma}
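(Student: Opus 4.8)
The plan is to obtain Lemma~\ref{lem9} as a direct specialization of Theorem~\ref{thm4} to the function class $\mathscr{F}=\hat{\mathscr{F}}_{\theta}$ and the given loss $c$. Note that in this statement $\theta_1,\dots,\theta_K$ are held fixed (the probability is only over the sample $(x_i,y_i)_{i=1}^n\sim\mathcal{D}$), so the functions $g(\cdot;\theta_k)$ are fixed maps into the simplex $\Delta$, and in particular their $j$-th coordinates $g_j(\cdot;\theta_k)$ take values in $[0,1]$. For $\phi=\sum_{k=1}^K w_k g(\cdot;\theta_k)\in\hat{\mathscr{F}}_{\theta}$ we have $\phi_j=\sum_{k=1}^K w_k g_j(\cdot;\theta_k)$, so setting $\mathscr{F}_j=\{\sum_{k=1}^K w_k g_j(\cdot;\theta_k): w\in\Delta_K\}$, i.e. the convex hull of the finite set $\{g_j(\cdot;\theta_k)\}_{k=1}^K$, gives $\hat{\mathscr{F}}_{\theta}\subseteq\mathscr{F}_1\times\cdots\times\mathscr{F}_M$. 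Since $c$ is assumed to satisfy the hypotheses of Theorem~\ref{thm4}, that theorem applies and yields, with probability at least $1-\eta$, $|\frac1n\sum_{i=1}^n c(f(x_i),y_i)-\mathbb{E}_{(x,y)\sim\mathcal{D}}[c(f(x),y)]|\le \beta L\sum_{j=1}^M G_n[\mathscr{F}_j]+\sqrt{8\log(4/\eta)/n}$ for every $f\in\hat{\mathscr{F}}_{\theta}$.

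The remaining task is to show $G_n[\mathscr{F}_j]\le 2K/\sqrt{n}$ for each $j$. The key point is that $f\mapsto\frac{2}{n}\sum_{i=1}^n\xi_i f(x_i)$ is a linear functional, so its supremum in absolute value over a convex hull is attained at a vertex: $\sup_{f\in\mathscr{F}_j}|\frac{2}{n}\sum_i\xi_i f(x_i)|=\max_{1\le k\le K}|\frac{2}{n}\sum_i\xi_i g_j(x_i;\theta_k)|$. Bounding this maximum of nonnegative quantities by the sum over $k$, and then applying Jensen's inequality (or Cauchy–Schwarz) conditionally on the sample together with $g_j(\cdot;\theta_k)\in[0,1]$, each summand satisfies $\mathbb{E}|\frac{2}{n}\sum_i\xi_i g_j(x_i;\theta_k)|\le\frac{2}{n}\bigl(\sum_i g_j(x_i;\theta_k)^2\bigr)^{1/2}\le\frac{2}{n}\sqrt{n}=\frac{2}{\sqrt{n}}$. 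Summing over the $K$ vertices gives $G_n[\mathscr{F}_j]\le 2K/\sqrt{n}$, hence $\beta L\sum_{j=1}^M G_n[\mathscr{F}_j]\le 2\beta LMK/\sqrt{n}$, and plugging this into the bound from the previous paragraph yields exactly the claimed inequality.

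I do not expect a genuine obstacle: the lemma is essentially bookkeeping on top of Theorem~\ref{thm4}. The only steps requiring mild care are the reduction of the Gaussian-complexity supremum over the convex hull to its $K$ extreme points (valid because the Gaussian average is linear in $f$) and the deliberate use of the crude estimate $\mathbb{E}[\max_k|\cdot|]\le\sum_k\mathbb{E}|\cdot|$; a sharper bound would replace the factor $K$ by $\sqrt{\log K}$, but the linear-in-$K$ form is all that is needed downstream, where it is absorbed by the $n=\Omega(K^2/\varepsilon^2)$ scaling used in Theorem~\ref{thm3}.
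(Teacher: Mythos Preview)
Your proposal is correct and follows essentially the same route as the paper: decompose $\hat{\mathscr{F}}_{\theta}$ into its $M$ coordinate classes, bound each Gaussian complexity by $2K/\sqrt{n}$ via the convex-hull/vertex reduction and the elementary variance computation for a Gaussian average of $[0,1]$-valued terms, and then invoke Theorem~\ref{thm4}. The only cosmetic difference is that the paper writes the bound $\sup_w|\sum_k w_k a_k|\le\sum_k|a_k|$ directly rather than phrasing it as ``sup over a convex hull is attained at a vertex, then $\max\le\sum$,'' but these are the same inequality.
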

\begin{proof}
Denote 
\begin{eqnarray*}
\hat{\mathscr{F}}_{\theta}(i)=\bigg\{\phi(x) = \sum_{k=1}^Kw_kg_i(x;w_k)\Big|\\
w_k\ge0, \sum_{k=1}^K w_k=1\bigg\},1\le i\le M.
\end{eqnarray*}
 We have that $\hat{\mathscr{F}}_{\theta}\subseteq\bigotimes\limits_{k=1}^M\hat{\mathscr{F}}_{\theta}(i)$, where $\bigotimes$ stands for a Cartesian product operation. The Gaussian comlexities of $\hat{\mathscr{F}}_{\theta}(i)$'s can be bounded as
\begin{eqnarray*}
G_n[\hat{\mathscr{F}}_{\theta}(j)] &=& \mathbb{E}_{x,\xi}[\sup_{\phi\in\hat{\mathscr{F}}_{\theta}(j)}|\frac{2}{n}\sum\limits_{i=1}^n\xi_i\phi(x_i)|]\\
&=&\mathbb{E}_{x,\xi}[\sup_{w}|\frac{2}{n}\sum\limits_{i=1}^n\xi_i\sum\limits_{k=1}^Kw_kg_j(x_i;w_k)|]\\
&=&\mathbb{E}_{x,\xi}[\sup_{w}|\sum\limits_{k=1}^Kw_k\frac{2}{n}\sum\limits_{i=1}^n\xi_ig_j(x_i;w_k)|]\\
&\le& \mathbb{E}_{x,\xi}[2\sum\limits_{k=1}^K|\frac{1}{n}\sum\limits_{i=1}^n\xi_ig_j(x_i;w_k)|]\\
&\le& \mathbb{E}_{x}[2\sum\limits_{k=1}^K\sqrt{\mathbb{E}_g(\frac{1}{n}\sum\limits_{i=1}^n\xi_ig_j(x_i;w_k))^2}]\\
&=& \mathbb{E}_{x}[2\sum\limits_{k=1}^K\sqrt{\frac{1}{n^2}\sum\limits_{i=1}^ng_j(x_i;w_k)^2}]\\
&\le& \mathbb{E}_{x}[2\sum\limits_{k=1}^K\sqrt{\frac{1}{n}}]\\
&=&\frac{2K}{\sqrt{n}}.
\end{eqnarray*}
The desired result follows by applying Theorem \ref{thm4} to $\hat{\mathscr{F}}_{\theta},\hat{\mathscr{F}}_{\theta}(1),\cdots,\hat{\mathscr{F}}_{\theta}(M)$ and $\mathcal{D}$.
\end{proof}

Next, we give the definition of semi-empirical risk. The term "semi-" implies that it is empirical with respect to the training set but not the smoothing operation.
\begin{definition}
\label{def5}
(Semi-empirical risk). For a surrogate loss function $l(\cdot, \cdot):\mathbb{R}^M\times\mathbb{R}^M\to\mathbb{R}$ and training set $\{(x_i,y_i)\}_{i=1}^n$, the semi-empirical risk of $\phi(x) = \sum_{k=1}^Kw_kg(x;\theta_k)\in\hat{\mathscr{F}}_{\theta}$ are defined as 
\begin{eqnarray}
 \mathcal{R}_{se}[\phi]&=& \frac{1}{n}\sum_{i=1}^n l(\sum_{k=1}^K w_kg(x_i;\theta_k),y_i).
\end{eqnarray}
\end{definition}
We can use Lemma \ref{lem3} and \ref{lem9} to prove the following result:
\begin{theorem}
Suppose for all $y\in \mathcal{Y}$, $l(\cdot, y)$ is a Lipschitz function with constant $L$ and is uniformly bounded. Fix $\phi\in\mathscr{F}_{p}$, then for any $\eta > 0$, with probability at least $1-\eta$ over the training dataset $\{(x_i,y_i)\}_{i=1}^n$ drawn \emph{i.i.d.} from $\mathcal{D}$ and the parameters $\theta_1,...,\theta_K$ drawn \emph{i.i.d.} from $p$, the semi-empirical risk minimizer $\hat{\phi}$ over $\hat{\mathscr{F}}_{\theta}$ satisfies
\begin{eqnarray*}
\mathcal{R}[\hat{\phi}]-\mathcal{R}[\phi]
< \frac{4\beta LMK+4\sqrt{2\log\frac{8}{\eta}}}{\sqrt{n}}\\+2L\sqrt{\|\phi\|_p}\sqrt[4]{\frac{M}{K}}(1+\sqrt{2\log\frac{2}{\eta}})^{\frac{1}{2}},
\end{eqnarray*}
where $\beta$ is a constant.
\end{theorem}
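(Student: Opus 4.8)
The plan is to run the textbook ``approximation plus estimation'' argument: use Lemma~\ref{lem3} to approximate the fixed $\phi\in\mathscr{F}_p$ by an element of the random class $\hat{\mathscr{F}}_\theta$, use Lemma~\ref{lem9} to control the deviation between semi-empirical and true risk uniformly over $\hat{\mathscr{F}}_\theta$, and then do the constant bookkeeping.

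First I would fix $\phi\in\mathscr{F}_p$ and apply Lemma~\ref{lem3} with failure probability $\eta/2$, so that its logarithmic factor becomes $\log\tfrac{2}{\eta}$: assuming $K$ is large enough for Lemma~\ref{lem3} (hence Lemma~\ref{lem2}) to apply, this yields, with probability at least $1-\eta/2$ over $\theta_1,\dots,\theta_K$, an element $\tilde\phi\in\hat{\mathscr{F}}_\theta$ with
\[
\mathcal{R}[\tilde\phi]-\mathcal{R}[\phi]<2L\sqrt{\|\phi\|_p}\sqrt[4]{\tfrac{M}{K}}\bigl(1+\sqrt{2\log\tfrac{2}{\eta}}\bigr)^{1/2}.
\]
(If $K$ is too small for this, the right-hand side of the asserted inequality already exceeds $2L$ and the statement is vacuous by boundedness of $l$.) Next I would apply Lemma~\ref{lem9} with failure probability $\eta/2$, so its logarithmic factor becomes $\log\tfrac{8}{\eta}$. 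Because the Gaussian-complexity estimate $G_n[\hat{\mathscr{F}}_\theta(j)]\le 2K/\sqrt n$ underpinning Lemma~\ref{lem9} is valid for \emph{every} fixed realization of $\theta_1,\dots,\theta_K$ (it uses only $g_j\in[0,1]$), Lemma~\ref{lem9} holds conditionally on the $\theta$'s, hence under the joint draw: with probability at least $1-\eta/2$ over the sample,
\[
\bigl|\mathcal{R}_{se}[f]-\mathcal{R}[f]\bigr|\le\tfrac{2\beta LMK}{\sqrt n}+\sqrt{\tfrac{8\log(8/\eta)}{n}}\quad\text{for every }f\in\hat{\mathscr{F}}_\theta.
\]

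Then I would telescope
\begin{align*}
\mathcal{R}[\hat\phi]-\mathcal{R}[\phi]&=\bigl(\mathcal{R}[\hat\phi]-\mathcal{R}_{se}[\hat\phi]\bigr)+\bigl(\mathcal{R}_{se}[\hat\phi]-\mathcal{R}_{se}[\tilde\phi]\bigr)\\
&\quad+\bigl(\mathcal{R}_{se}[\tilde\phi]-\mathcal{R}[\tilde\phi]\bigr)+\bigl(\mathcal{R}[\tilde\phi]-\mathcal{R}[\phi]\bigr)
\end{align*}
and bound the four summands. The first and third are each at most $\tfrac{2\beta LMK}{\sqrt n}+\sqrt{\tfrac{8\log(8/\eta)}{n}}$ by the displayed consequence of Lemma~\ref{lem9}, since $\hat\phi,\tilde\phi\in\hat{\mathscr{F}}_\theta$ (the minimizer exists because $\hat{\mathscr{F}}_\theta$ is compact and $\mathcal{R}_{se}$ is continuous). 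The second is $\le 0$ because $\hat\phi$ minimizes $\mathcal{R}_{se}$ over $\hat{\mathscr{F}}_\theta$ while $\tilde\phi$ lies in $\hat{\mathscr{F}}_\theta$. The fourth is at most the Lemma~\ref{lem3} bound above. Adding, using $2\sqrt{8\log(8/\eta)/n}=4\sqrt{2\log(8/\eta)}/\sqrt n$ (as $\sqrt8=2\sqrt2$), and union-bounding the two exceptional events of probability $\eta/2$ each, gives the claimed inequality with probability at least $1-\eta$.

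I expect the only genuinely delicate point to be keeping the two randomness sources straight: $\tilde\phi$ depends on $\theta_1,\dots,\theta_K$ and $\hat\phi$ on both the $\theta$'s and the sample, so the estimation control must be a bound that is \emph{uniform} over the entire random hypothesis class $\hat{\mathscr{F}}_\theta$ and usable conditionally on the $\theta$'s --- which is precisely why Lemma~\ref{lem9} was stated in that form and why its complexity bound was proved for arbitrary fixed $\theta$'s. Everything after that is routine arithmetic.
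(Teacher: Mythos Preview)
Your proposal is correct and follows essentially the same approach as the paper: a four-term telescoping decomposition where two terms are controlled by the uniform deviation bound of Lemma~\ref{lem9} (applied with failure probability $\eta/2$), one term is nonpositive by optimality of $\hat\phi$, and one term is controlled by the approximation bound of Lemma~\ref{lem3} (also with failure probability $\eta/2$). The only cosmetic difference is that the paper introduces $\phi^*=\arg\min_{\hat{\mathscr{F}}_\theta}\mathcal{R}$ as the comparison point in the telescope and then uses $\mathcal{R}[\phi^*]\le\mathcal{R}[\tilde\phi]$, whereas you use the Lemma~\ref{lem3} element $\tilde\phi$ directly; your version is marginally more economical and avoids needing $\phi^*$ to exist.
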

\begin{proof}
Let $\phi^*$ be the minimizer of $\mathcal{R}$ over $\hat{\mathscr{F}}_{\theta}$. Combine Lemma \ref{lem3} and \ref{lem9}, we derive that, with probability at least $1-2\delta$ over the training dataset and the choice of the parameters $\theta_1,...,\theta_K$,
\begin{eqnarray*}
&&\mathcal{R}[\hat{\phi}]-\mathcal{R}[\phi] \\&=& (\mathcal{R}[\hat{\phi}]-\mathcal{R}_{se}[\hat{\phi}])+(\mathcal{R}_{se}[\hat{\phi}]- \mathcal{R}_{se}[\phi^*])\\&&+ (\mathcal{R}_{se}[\phi^*]- \mathcal{R}[\phi^*])+(\mathcal{R}[\phi^*]-\mathcal{R}[\phi])\\
&<& \frac{2\beta LMK+2\sqrt{2\log\frac{4}{\eta}}}{\sqrt{n}}+0+\frac{2\beta LMK+2\sqrt{2\log\frac{4}{\eta}}}{\sqrt{n}}\\
&&+2L\sqrt{\|\phi\|_p}\sqrt[4]{\frac{M}{K}}(1+\sqrt{2\log\frac{1}{\eta}})^{\frac{1}{2}}\\
&=&\frac{4\beta LMK+4\sqrt{2\log\frac{4}{\eta}}}{\sqrt{n}}\\
&&+2L\sqrt{\|\phi\|_p}\sqrt[4]{\frac{M}{K}}(1+\sqrt{2\log\frac{1}{\eta}})^{\frac{1}{2}}.
\end{eqnarray*}
\end{proof}
\begin{lemma}\label{lem10}
Let $\mu$ be a probability distribution on $\Delta$. For any $\eta>0$, with probability at least $1-\eta$ over $x_1,...,x_s$ drawn \emph{i.i.d.} from $\mu$, it holds that
\begin{equation}
\|\frac{1}{s}\sum_{i=1}^s x_i-\mathbb{E}_{x\sim\mu}[x]\|_2\le\frac{1}{\sqrt{s}}(1+\sqrt{2\log\frac{1}{\eta}})
\end{equation}
\end{lemma}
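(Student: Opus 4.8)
The plan is to mimic the McDiarmid-based argument already used in the proof of Lemma~\ref{lem1}. Write $u(x_1,\dots,x_s)=\|\frac{1}{s}\sum_{i=1}^s x_i-\mathbb{E}_{x\sim\mu}[x]\|_2$, and observe the single structural fact we need: every $x\in\Delta$ satisfies $\|x\|_2\le\|x\|_1=1$. The argument then has three short steps: bound $\mathbb{E}[u]$, establish a bounded-differences property for $u$, and apply McDiarmid's inequality followed by inverting the resulting tail bound.

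First I would control the mean. By Jensen's inequality $\mathbb{E}[u]\le\sqrt{\mathbb{E}[u^2]}$, and since the $x_i$ are i.i.d.\ with common distribution $\mu$,
\begin{equation*}
\mathbb{E}[u^2]=\frac{1}{s^2}\sum_{i=1}^s\mathbb{E}\|x_i-\mathbb{E}x_i\|_2^2=\frac{1}{s}\big(\mathbb{E}\|x\|_2^2-\|\mathbb{E}x\|_2^2\big)\le\frac{1}{s},
\end{equation*}
where the last inequality uses $\|x\|_2\le 1$ on $\Delta$. Hence $\mathbb{E}[u]\le 1/\sqrt{s}$.

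Next, changing a single coordinate $x_i\mapsto\tilde x_i$, the triangle inequality for $\|\cdot\|_2$ gives
\begin{equation*}
|u(\dots,x_i,\dots)-u(\dots,\tilde x_i,\dots)|\le\tfrac{1}{s}\|x_i-\tilde x_i\|_2\le\tfrac{2}{s},
\end{equation*}
since $\|x_i\|_2,\|\tilde x_i\|_2\le 1$. McDiarmid's inequality, with $\sum_{i=1}^s(2/s)^2=4/s$, then yields $\mathbb{P}[u-\mathbb{E}u\ge\varepsilon]\le\exp(-s\varepsilon^2/2)$, and therefore $\mathbb{P}[u-1/\sqrt{s}\ge\varepsilon]\le\exp(-s\varepsilon^2/2)$. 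Finally, setting the right-hand side equal to $\eta$ gives $\varepsilon=\sqrt{2\log(1/\eta)/s}$, so that with probability at least $1-\eta$,
\begin{equation*}
u\le\frac{1}{\sqrt{s}}+\sqrt{\frac{2\log(1/\eta)}{s}}=\frac{1}{\sqrt{s}}\Big(1+\sqrt{2\log\tfrac{1}{\eta}}\Big),
\end{equation*}
which is the claim. There is no real obstacle here; the only points requiring any care are invoking the uniform bound $\|x\|_2\le 1$ on the simplex (which simultaneously controls the variance term and the bounded-differences constant) and tracking the McDiarmid constant correctly so the exponent comes out as $s\varepsilon^2/2$ rather than off by a factor.
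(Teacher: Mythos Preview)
Your proposal is correct and matches the paper's own proof essentially step for step: define $u$, bound $\mathbb{E}[u]\le 1/\sqrt{s}$ via Jensen and $\|x\|_2\le 1$ on $\Delta$, establish the bounded-differences constant $2/s$, apply McDiarmid to get the $\exp(-s\varepsilon^2/2)$ tail, and invert. There is nothing to add.
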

\begin{proof}
Define $u(x_1,\cdots,x_s) =\|\frac{1}{s}\sum_{i=1}^s x_i-\mathbb{E}[x]\|_2$.
By using Jensen’s inequality, we have
\begin{eqnarray*}
\mathbb{E}[u(x)]\le\sqrt{\mathbb{E}[u^2(x)]} &=& \sqrt{\mathbb{E}[ \|\frac{1}{s}\sum_{i=1}^s x_i-\mathbb{E}[x]\|_2^2]}\\&=&\sqrt{\frac{1}{s}(\mathbb{E}\|x\|_2^2-\|\mathbb{E}[x]\|_2^2)}\\&\le&\frac{1}{\sqrt{s}}.
\end{eqnarray*}
Next, for $x_1,\cdots,x_M$ and $\tilde{x}_k$, we have
\begin{eqnarray*}
&&|u(x_1,\cdots,x_s)-u(x_1,\cdots,\tilde{x}_k,\cdots, x_s)|\\
&=&|\|\frac{1}{s}\sum_{i=1}^s x_i-\mathbb{E}[x]\|_2\\
&&-\|\frac{1}{s}(\sum_{i=1,i\neq k}^s x_i+\tilde{x}_k)-\mathbb{E}[x]\|_2|\\
&\le&\|\frac{1}{s}\sum_{i=1}^s x_i-\frac{1}{s}(\sum_{i=1,i\neq k}^s x_i+\tilde{x}_k)\|_2\\
&=&\frac{\|x_k-\tilde{x}_k\|_2}{s}\\
&\le&\frac{2}{s}.
\end{eqnarray*}
Now we can use McDiarmid’s inequality to bound $u(x)$, which gives
\begin{equation}
\mathbb{P}[u(x)-\frac{1}{\sqrt{s}}\ge\varepsilon]\le\mathbb{P}[u(x)-\mathbb{E}u(x)\ge\varepsilon]\le\exp(-\frac{s\varepsilon^2}{2}).
\end{equation}
The result follows by setting $\eta$ to the right hand side and solving $\varepsilon$.
\end{proof}

Now we are ready to prove Theorem 2.

\emph{Proof of Theorem 2.} 
Let $\phi_0\in\mathscr{F}_p$ such that $\mathcal{R}[\phi_0]<\inf_{\phi\in\mathscr{F}_p}\mathcal{R}[\phi]+\frac{\varepsilon}{4}$. By Lemma \ref{lem10}, with probability at least $1-\frac{\eta}{3}$, 
\begin{equation}
\begin{split}
&\|\frac{1}{s}\sum_{j=1}^sf(x_i+\delta_{ijk};\theta_k)-g(x_i;\theta_k)\|_2\\
&\le\frac{1+\sqrt{2\log\frac{3Kn}{\eta}}}{\sqrt{s}}, 1\le i\le n, 1\le k\le K,
\end{split}
\end{equation}
hold simultaneously. So with probability at least $1-\frac{\eta}{3}$, for every $\phi=\sum_{k=1}^Kw_kg(x;\theta_k)\in\hat{\mathscr{F}}_{\theta}$, it holds that
\begin{eqnarray*}
&&|\mathcal{R}_{emp}[\phi]-\mathcal{R}_{se}[\phi]| \\
&=& |\frac{1}{n}\sum_{i=1}^n [l(\sum_{k=1}^K w_k[\frac{1}{s}\sum_{j=1}^sf(x_i+\delta_{ijk};\theta_k)],y_i) \\
&&-l(\sum_{k=1}^K w_kg(x_i;\theta_k),y_i)]|\\
&\le&\frac{L}{n}\sum_{i=1}^n \|\sum_{k=1}^K w_k[\frac{1}{s}\sum_{j=1}^sf(x_i+\delta_{ijk};\theta_k)-g(x_i;\theta_k)]\|_2\\
&\le&\frac{L}{n}\sum_{i=1}^n \sum_{k=1}^K w_k\|\frac{1}{s}\sum_{j=1}^sf(x_i+\delta_{ijk};\theta_k)-g(x_i;\theta_k)\|_2\\
&\le&\frac{L}{n}\sum_{i=1}^n \sum_{k=1}^K w_k\frac{1+\sqrt{2\log\frac{3Kn}{\eta}}}{\sqrt{s}}\\
&=& \frac{L(1+\sqrt{2\log\frac{3Kn}{\eta}})}{\sqrt{s}}\triangleq\varepsilon_1.
\end{eqnarray*}
By Lemma \ref{lem9}, with probability at least $1-\frac{\eta}{3}$, for every $\phi \in \hat{\mathscr{F}}_{\theta}$, it holds that
\begin{equation*}
|\mathcal{R}_{se}[\phi]-\mathcal{R}[\phi]|\le \frac{2\beta LMK}{\sqrt{n}}+\sqrt{\frac{8\log\frac{12}{\eta}}{n}}\triangleq\varepsilon_2.
\end{equation*}
Let $\phi^*$ be the minimizer of $\mathcal{R}$ over $\hat{\mathscr{F}}_{\theta}$. By Lemma \ref{lem3}, with probability at least $1-\frac{\delta}{3}$, for $K\ge M\|\phi_0\|_p^2(1+\sqrt{2\log\frac{3}{\eta}})^2$,
\begin{equation*}
\mathcal{R}[\phi^*]-\mathcal{R}[\phi_0] <2L\sqrt{\|\phi\|_p}\sqrt[4]{\frac{M}{K}}(1+\sqrt{2\log\frac{3}{\eta}})^{\frac{1}{2}}\triangleq\varepsilon_3 .
\end{equation*}

So with probability at least $1-\eta$, it holds that
\begin{eqnarray*}
&&\mathcal{R}[\hat{\phi}]-\inf_{\phi\in\mathscr{F}_p}\mathcal{R}[\phi] \\
&=& (\mathcal{R}[\hat{\phi}]-\mathcal{R}_{se}[\hat{\phi}]) + (\mathcal{R}_{se}[\hat{\phi}]-\mathcal{R}_{emp}[\hat{\phi}])\\ &&+(\mathcal{R}_{emp}[\hat{\phi}]-\mathcal{R}_{emp}[\phi^*])
+(\mathcal{R}_{emp}[\phi^*]-\mathcal{R}_{se}[\phi^*])\\
&&+(\mathcal{R}_{se}[\phi^*] - \mathcal{R}[\phi^*])+(\mathcal{R}[\phi^*]-\mathcal{R}[\phi_0])\\
&&+(\mathcal{R}[\phi_0]-\inf_{\phi\in\mathscr{F}_p}\mathcal{R}[\phi])\\
&<&\varepsilon_2+\varepsilon_1+ 0+\varepsilon_1+\varepsilon_2+\varepsilon_3+\frac{\varepsilon}{4}\\
&=&2\varepsilon_1+2\varepsilon_2+\varepsilon_3+\frac{\varepsilon}{4}.
\end{eqnarray*}
When $K>\frac{256L^4\|\phi_0\|^2_pM(1+\sqrt{2\log\frac{1}{\eta}})^2}{\varepsilon^4}$, 
$n>\frac{64(2\beta LMK+\sqrt{8\log\frac{12}{\eta}})^2}{\varepsilon^2}$, $s>\frac{64L^2(1+\sqrt{2\log\frac{3Kn}{\eta}})^2}{\varepsilon^2}$, we have
\begin{equation}
\mathcal{R}[\hat{\phi}]-\inf_{\phi\in\mathscr{F}_p}\mathcal{R}[\phi]<\varepsilon.
\end{equation}
If $\mathcal{R}[\phi_0]=\inf_{\phi\in\mathscr{F}_p}\mathcal{R}[\phi]$, which means $\|\phi_0\|_p$ is independent of $\varepsilon$, $K=O(\frac{1}{\varepsilon^4})$.\qed

\section{Detailed SWEEN algorithm}\label{alg}

\begin{algorithm}[htb] 
\caption{SWEEN} 
\label{algsween}
\begin{algorithmic}[1] 
\STATE {\bf Input:}  Training set $\hat{p}_{\text{train}}$, evaluation set $\hat{p}_{\text{eval}}$, Ensembling weight $w\in\mathbb{R}^K$, candidate model parameters $\theta=\{\theta_1,...,\theta_K\}\in\Theta^K$.
\STATE Initialize $\theta_1,...,\theta_K, w$
\FOR{$i=1$ to $K$}
\STATE Train candidate models $\theta_i$ using $\hat{p}_{\text{train}}$.
\ENDFOR
\STATE Construct SWEEN model\\ 
$g_{\text{sween}}(\cdot;\theta, w) = \sum_{k=1}^K w_kg(\cdot;\theta_k)$
\STATE Train $w$ using $\hat{p}_{\text{eval}}$
\STATE {\bfseries return} $w$ and $\theta_1,...,\theta_K$
\end{algorithmic} 
\end{algorithm}

\section{Supplementary material for experiments}\label{b}
\subsection{Detailed settings and hyper-parameters}\label{b1}
We perform all experiments on CIFAR-10 with a single GeForce GTX 1080 Ti GPU. For the experiments on ImageNet, we use eight V100 GPUs.

For training the SWEEN models on CIFAR-10, we divide the training set into two parts, one for training candidate models, and the other for solving weights. We employ 2,000 images for solving weights on CIFAR-10. For ImageNet, we use the whole training set to train candidate models and 1/1000 of the training set to solve weights.

For Gaussian data augmentation training, all the models are trained for 400 epochs using SGD on CIFAR-10. The models on ImageNet are trained for 90 epochs. The learning rate is initialized set as 0.01, and decayed by 0.1 at the 150th/300th epoch.

For MACER training, we use the same hyper-parameters as \citet{zhai2020macer}, i.e., we use $k=16, \beta = 16.0,\gamma=8.0$, and we use $\lambda=12.0$ for $\sigma=0.25$ and $\lambda=4.0$ for $\sigma=0.50$. We train the models for 440 epochs, the learning rate is initialized set as 0.01, and decayed by 0.1 at the 200th/400th epoch.

\subsection{SWEEN versus adversarial attacks}
We further investigate the performance of SWEEN models versus AutoAttack \citep{croce2020reliable}, which is an ensemble of four diverse attacks to reliably evaluate robustness. Similar to \citet{salman2019provably}, we used 128 samples to estimate the smoothed classifier. We share the results below in Table \ref{tabaa}. It can be seen that SWEEN can improve the empirical robustness as well.

\begin{table}[htb]
\caption{Certified accuracy and empirical accuracy versus AutoAttack on CIFAR-10. All candidate models are trained via the standard training.}
  \label{tabaa}
\begin{center}
  \begin{tabular}{lllllll}
    \toprule
    $\sigma$&Model& 0.00&0.25&0.5&0.75&1.00\\
    \midrule
    \multirow{5}{*}{0.50}
&SWEEN-3 (ACA) &70.9&61.4&50.8&38.3&27.7\\
&SWEEN-3 (AA)&\textbf{75.0}&\textbf{67.0}&\textbf{58.9}&\textbf{48.4}&\textbf{39.9}\\
&ResNet-20 (AA)&72.5&64.7&55.4&46.2&37.0\\
&ResNet-26 (AA)&74.5&65.4&57.3&46.3&35.7\\
&ResNet-32 (AA)&73.8&64.5&55.3&45.0&35.3\\
    \bottomrule
  \end{tabular}
\end{center}
\end{table}

\end{document}